\documentclass{article}

\usepackage{microtype}
\usepackage{graphicx}
\usepackage{subcaption}
\usepackage{booktabs} 

\usepackage{hyperref}

\usepackage[preprint]{icml2026}

\usepackage{amsmath}
\usepackage{amssymb}
\usepackage{mathtools}
\usepackage{amsthm}

\usepackage[capitalize,noabbrev]{cleveref}

\theoremstyle{plain}
\newtheorem{theorem}{Theorem}[section]
\newtheorem{proposition}[theorem]{Proposition}

\theoremstyle{definition}

\theoremstyle{remark}

\usepackage[textsize=tiny]{todonotes}
\usepackage{enumitem}
\usepackage{inconsolata}

\usepackage{graphicx}

\usepackage{hyperref}       
\usepackage{url}            
\usepackage{booktabs}       
\usepackage{amsfonts}       
\usepackage{nicefrac}       
\usepackage{microtype}      
\usepackage{xcolor}         
\usepackage{amsmath}
\usepackage{amssymb}
\usepackage{mathtools}
\usepackage{amsthm}
\usepackage{multirow}
\usepackage{graphicx}
\usepackage{colortbl}
\usepackage{arydshln}
\usepackage{enumitem}
\usepackage{wrapfig}
\usepackage{float}
\usepackage{adjustbox}

\usepackage[capitalize,noabbrev]{cleveref}
\usepackage{bbding}
\usepackage{pifont}
\usepackage{minitoc}
\usepackage{booktabs}
\usepackage{minitoc}

\definecolor{wjy}{named}{red}
\definecolor{myblue}{RGB}{224,247,250}
\definecolor{myblue2}{RGB}{186,230,251}

\definecolor{backblue}{RGB}{210, 230, 250}

\definecolor{backred}{RGB}{255, 223, 223}

\definecolor{backgreen}{RGB}{220,244,229}

\definecolor{back_deepblue}{RGB}{180, 210, 240}

\definecolor{back_deepred}{RGB}{255, 200, 200}

\definecolor{back_deepgreen}{RGB}{190, 230, 210}

\definecolor{mygray}{gray}{0.95}

\definecolor{greentable3}{rgb}{0,0.5,0}

\definecolor{mgreen}{RGB}{6,128,67}
\definecolor{mgray}{RGB}{128,128,128}
\definecolor{mygreen}{RGB}{233,247,234}

\usepackage[most,skins,theorems]{tcolorbox}
\tcbset{
  takeawaysbox/.style={
    title=Takeaways,
    colback=default!80,
    colframe=black,
    fonttitle=\bfseries\small,
    coltitle=white,
    colbacktitle=black,
    enhanced,
    attach boxed title to top left={xshift=2.5mm,yshift=-2.5mm},
    boxed title style={rounded corners, size=small, colframe=black, colback=black},
    width=\linewidth,
    arc=3.5mm
  }
}
\definecolor{cvprblue}{rgb}{0.21,0.49,0.74}
\definecolor{lightblue}{RGB}{220,235,250}  
\definecolor{lightgreen}{RGB}{230,242,222} 
\definecolor{lightgreen2}{RGB}{241,248,235}
\definecolor{lightgreen3}{RGB}{236,245,228}
\definecolor{lightyellow}{RGB}{255,255,210} 
\definecolor{lightorange}{RGB}{255,235,210} 
\definecolor{default}{RGB}{255,255,255}
\definecolor{pseudoblue}{RGB}{55,113,175}
\definecolor{mgreen}{RGB}{6,128,67}
\definecolor{mgray}{RGB}{128,128,128}
\definecolor{mygreen}{RGB}{233,247,234}
\definecolor{mygray}{gray}{0.97}
\definecolor{mgray}{RGB}{240,240,240}
\definecolor{purple}{RGB}{128,128,247}
\definecolor{pink}{RGB}{252,237,236}
\definecolor{pink2}{RGB}{234,223,222}
\definecolor{pink3}{RGB}{237,227,228}
\definecolor{blue1}{RGB}{229,243,247}
\usepackage{pifont}

\usepackage{tikz}

\NewDocumentCommand{\circnum}{O{black!25} m}{%
  \tikz[baseline=(char.base)]{
    \node[shape=circle, fill=#1, inner sep=0.8pt, font=\small\bfseries] (char)
      {\textcolor{white}{#2}};
  }%
}
\NewDocumentCommand{\circnumgreen}{O{teal!18} m}{%
  \tikz[baseline=(char.base)]{
    \node[shape=circle, fill=#1, inner sep=1pt, font=\small\bfseries] (char)
      {\textcolor{teal!70!black}{#2}};
  }%
}
\NewDocumentCommand{\circnumpink}{O{orange!20!white} m}{%
  \tikz[baseline=(char.base)]{
    \node[shape=circle, fill=#1, inner sep=1pt, font=\small\bfseries] (char)
      {\textcolor{orange!80!red}{#2}};
  }%
}
\NewDocumentCommand{\circnumblue}{O{cyan!25} m}{%
  \tikz[baseline=(char.base)]{
    \node[shape=circle, fill=#1, inner sep=1pt, font=\small\bfseries] (char)
      {\textcolor{blue!70!black}{#2}};
  }%
}
\NewDocumentCommand{\circnumpurple}{O{violet!20} m}{%
  \tikz[baseline=(char.base)]{
    \node[shape=circle, fill=#1, inner sep=1pt, font=\small\bfseries] (char)
      {\textcolor{violet!80!black}{#2}};
  }%
}

\icmltitlerunning{SSL: Sweet Spot Learning for Differentiated Guidance in Agentic Optimization}

\begin{document}

\twocolumn[
  \icmltitle{SSL: Sweet Spot Learning for Differentiated Guidance in Agentic Optimization}



  \icmlsetsymbol{equal}{*}
  \icmlsetsymbol{co}{\dag}

  \begin{icmlauthorlist}
    \icmlauthor{Jinyang Wu}{1,equal}
    \icmlauthor{Changpeng Yang}{2,equal}
    \icmlauthor{Yuhao Shen}{3}
    \icmlauthor{Fangzhi Xu}{4}
    \icmlauthor{Bolin Ni}{5}
    \icmlauthor{Chonghua Liao}{1}
    \icmlauthor{Yuchen Liu}{2}
    \icmlauthor{Hongzhen Wang}{2}
    \icmlauthor{Shuai Nie}{2}
    \icmlauthor{Shuai Zhang}{1}
    \icmlauthor{Haoran Luo}{4,co}
    \icmlauthor{Jiaming Xu}{2,co}
  \end{icmlauthorlist}

  \icmlaffiliation{1}{Tsinghua University}
  \icmlaffiliation{2}{Xiaomi Corporation}
  \icmlaffiliation{3}{Zhejiang University}
  \icmlaffiliation{4}{Nanyang Technological University}
  \icmlaffiliation{5}{Institute of Automation, Chinese Academy of Sciences}
  \icmlcorrespondingauthor{Jinyang Wu}{wu-jy23@mails.tsinghua.edu.cn}
  \icmlcorrespondingauthor{Changpeng Yang}{yangchangpeng@xiaomi.com}

  \icmlkeywords{Machine Learning, ICML}

  \vskip 0.3in
]



\printAffiliationsAndNotice{}  

\begin{abstract}
Reinforcement learning with verifiable rewards has emerged as a powerful paradigm for training intelligent agents. However, existing methods typically employ binary rewards that fail to capture quality differences among trajectories achieving identical outcomes, thereby overlooking potential diversity within the solution space. Inspired by the ``sweet spot'' concept in tennis-the racket's core region that produces optimal hitting effects, we introduce \textbf{S}weet \textbf{S}pot \textbf{L}earning (\textbf{SSL}), a novel framework that provides differentiated guidance for agent optimization. SSL follows a simple yet effective principle: progressively amplified, tiered rewards guide policies toward the sweet-spot region of the solution space. This principle naturally adapts across diverse tasks: visual perception tasks leverage distance-tiered modeling to reward proximity, while complex reasoning tasks reward incremental progress toward promising solutions. We theoretically demonstrate that SSL preserves optimal solution ordering and enhances the gradient signal-to-noise ratio, thereby fostering more directed optimization. Extensive experiments across GUI perception, short/long-term planning, and complex reasoning tasks show consistent improvements over strong baselines on 12 benchmarks, achieving up to 2.5× sample efficiency gains and effective cross-task transferability. Our work establishes SSL as a general principle for training capable and robust agents.
\end{abstract}

\begin{figure}[htp!]
\vskip -0.08in
\includegraphics[width=0.96\linewidth]{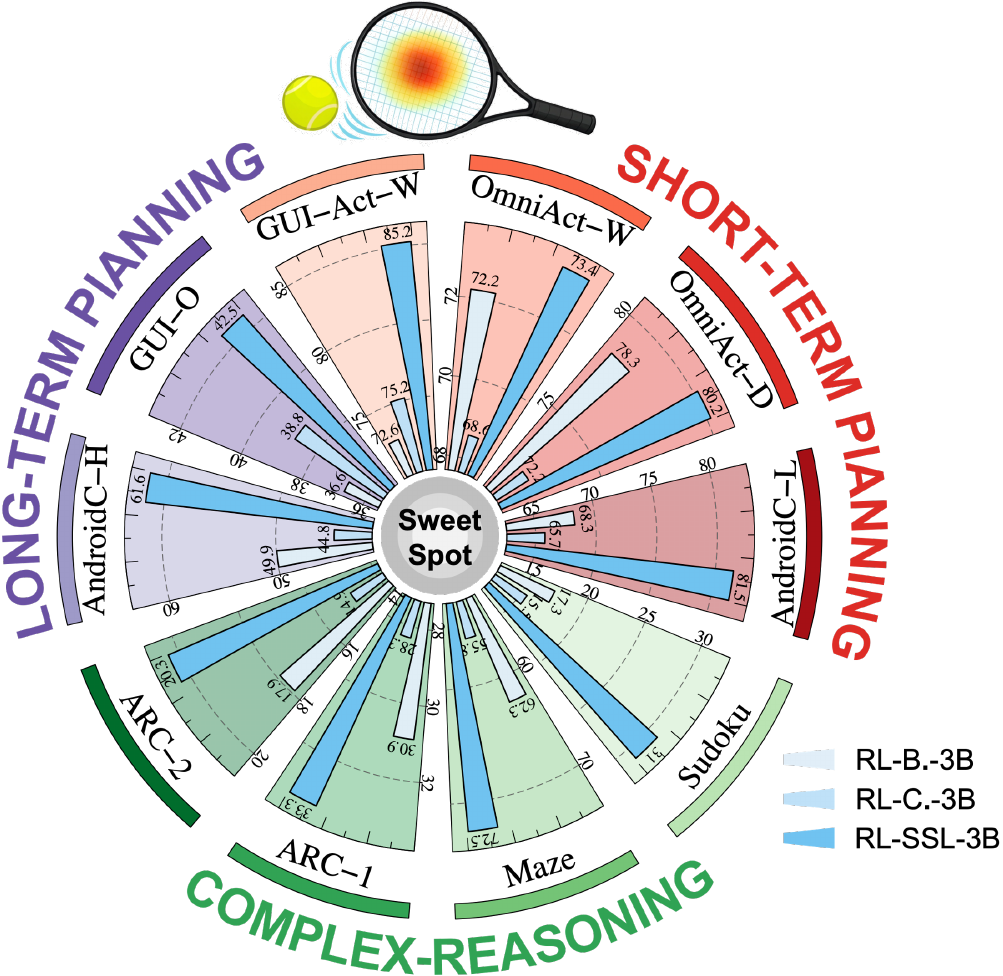}
\caption{\textbf{Performance comparison with Binary (RL-B.) and Continuous (RL-C.) Reward RL.} SSL (our method) exhibits remarkable improvements across diverse tasks, including long-/short-term planning and complex reasoning.}
\label{head_radar}
\vskip -0.15 in
\end{figure}

\section{Introduction}
\label{sec:intro}
Reinforcement learning with verifiable rewards (RLVR) has emerged as a transformative paradigm for training intelligent multimodal agents with sophisticated reasoning and planning capabilities~\cite{jaech2024openai,guo2025deepseek,team2025kimi,nguyen-etal-2025-gui,jiang-etal-2025-towards}. Unlike supervised fine-tuning (SFT) that typically depends on costly human-annotated reasoning chains, RLVR directly leverages automatically computable reward signals to optimize agent behaviors. This approach enables agents to autonomously develop complex cognitive abilities, including chain-of-thought reasoning~\cite{wei2022chain}, problem decomposition~\cite{zhou2022least}, self-correction~\cite{gandhi2025cognitive}, and multi-step planning~\cite{luo2025gui}.

However, mainstream RLVR typically adopts binary rewards, grouping all trajectories into success or failure and obscuring finer distinctions among them. Consider two GUI navigation trajectories that both successfully open application settings: one reaches the target in three actions, while another succeeds after eight redundant steps—yet both receive the same reward. Similarly, in maze navigation, multiple paths may reach the goal with vastly different characteristics: some take circuitous detours, while others follow direct corridors. These examples illustrate how coarse, undifferentiated rewards induces three challenges: \textbf{(i) Optimization Ambiguity}: without differentiated guidance, gradient updates may lack directional information to indicate which task-advancing behaviors merit reinforcement or discouragement~\cite{ng1999policy,baheri2023understanding};  \textbf{(ii) Learning Inefficiency}: given reward signals fail to reveal solution-quality differences, agents cannot effectively extract fruitful knowledge from diverse trajectories, leading to under-exploration of solution space and poor sample utilization~\cite{eschmann2021reward}; \textbf{(iii) Policy Fragility}: policies may overfit to incidental patterns (\emph{e.g.}, fortunate click) rather than robust task understanding~\cite{velu2023hindsight}.

\textit{Can we design a unified reward principle that provides differentiated guidance across the full solution space?} Inspired by the ``sweet spot'' in tennis~\cite{cross1998sweet}—the racket's core region that produces optimal hitting impact, we propose \textbf{S}weet \textbf{S}pot \textbf{L}earning (\textbf{SSL}), a versatile reward modeling scheme for training intelligent agents. The key insight is simple yet powerful: progressively amplified, tiered rewards guide policies toward the sweet-spot region of solution space. Rather than relying on a coarse success-failure split, SSL assigns hierarchical rewards to trajectories based on trajectory proximity to task completion. As shown in Figure~\ref{schema}, this principle adapts naturally across tasks: for GUI perception and navigation, progress manifests as spatial proximity to targets, quantified via distance-tiered rewards; for complex reasoning (e.g., maze navigation), rewards reflect incremental alignment with ground-truth solutions, inducing a progress-tiered structure. Through quality-conditioned reward shaping, SSL provides differentiated guidance that steer exploration toward high-quality solutions.

\begin{figure}[t!]
\vskip -0.05in
\centering
\includegraphics[width=0.96\linewidth]{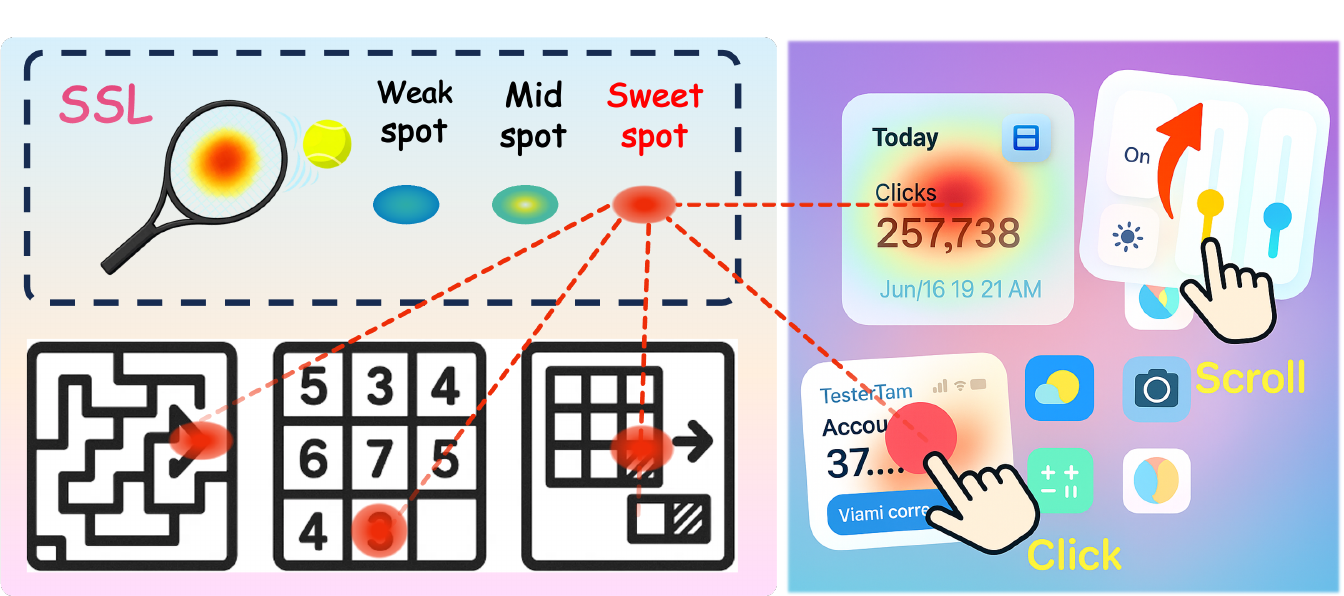}
\caption{\textbf{Sweet Spot Learning Overview.} We visualize sweet-spot zones and their instantiation across diverse tasks .}
\label{schema}
\vskip -0.15 in
\end{figure}

\textit{\underline{Theoretically}}, we demonstrate that SSL preserves the relative quality ordering over the solution space and improves gradient signal-to-noise ratios by providing more informative feedback for policy updates. Unlike sparse binary rewards that offer only success-failure supervision, SSL delivers granular feedback that accelerates convergence and improves sample efficiency. \textit{\underline{Empirically}}, as shown in Figure~\ref{head_radar}, we validate SSL's generality and effectiveness across GUI perception, short/long-term planning, and complex reasoning, demonstrating consistent improvements over representative baselines. SSL offers three key advantages: \textbf{(i) Clear Optimization Guidance}: tiered, proximity-aligned reward signals provide stronger directional feedback during policy updates, enabling more stable and effective optimization; \textbf{(ii) Enhanced Efficiency}: SSL matches or surpasses GRPO using only 40\% of its samples, achieving up to 2.5$\times$ data-efficiency; \textbf{(iii) Transferability \& Superior Performance:}
SSL consistently improves performance across diverse tasks, demonstrating strong transferability. Our contributions are:

\begin{itemize}[leftmargin=*, itemsep=0pt, topsep=4pt]
   \item We propose \emph{sweet spot learning}, a unified reward principle that integrates sweet-spot modeling into reinforcement learning. SSL offers tiered, quality-sensitive rewards with theoretical guarantees on preserving optimal solution ordering and improving gradient optimization.

   \vspace{0.03in}

   \item We instantiate SSL across diverse agent tasks: distance-tiered rewards for visual perception and navigation, and progress-tiered rewards for complex reasoning. This demonstrates SSL's broad applicability.
   
   \item We conduct extensive experiments on 12 benchmarks spanning GUI perception, short-/long-term planning, and complex reasoning, showing consistent gains over strong baselines, enhanced efficiency, and transferability.
\end{itemize}
\section{Related Work}
\label{sec:related}
\paragraph{RLVR for Agent Optimization.}
RLVR has become a key paradigm for training capable agents by leveraging automatically computable success criteria~\cite{guo2025deepseek,team2025kimi,wu2025thought}. Recent work typically applies on-policy algorithms like GRPO with binary terminal rewards across tasks such as math reasoning~\cite{wang2025survey}, GUI navigation~\cite{nguyen-etal-2025-gui,zhang2024large}, and puzzle solving~\cite{sudoku-extreme}. However, binary rewards treat all successful trajectories equally, ignoring solution quality-\emph{e.g.}, a three-step GUI path and an eight-step detour receive the same reward. This leads to sample inefficiency and policy fragility (overfitting to incidental patterns)~\cite{sutton1998reinforcement,xu2025towards,amodei2016concrete}. SSL addresses these issues by providing differentiated guidance through quality-ordered zone partitioning.

\begin{figure*}[htp!]
\centering
\includegraphics[width=0.99\linewidth]{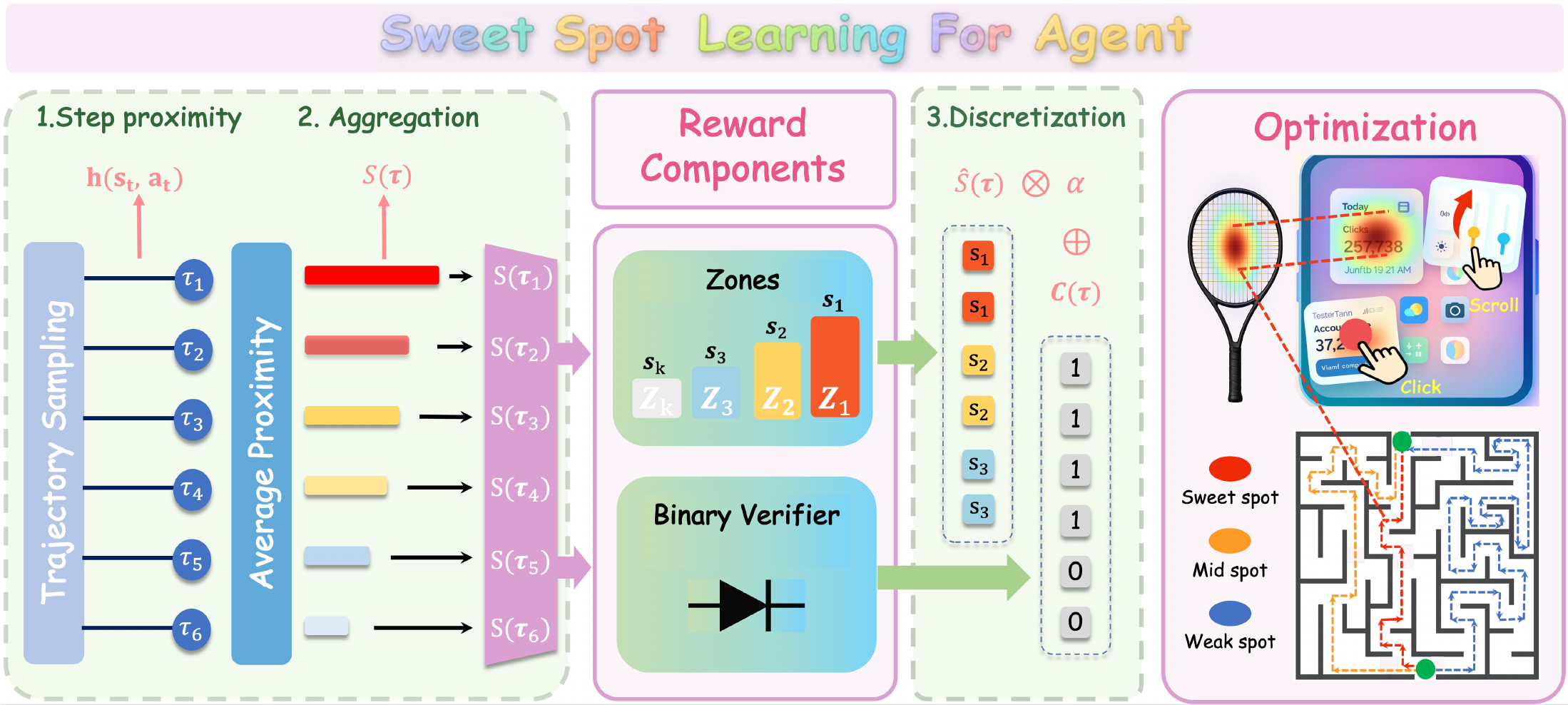}
\caption{\textbf{Flowchart of SSL.} SSL computes step proximities, aggregates them into trajectory-level scores, discretizes them into predefined sweet-spot zones, and finally applies the resulting structured rewards to downstream optimization tasks.}
\label{framework}
\vskip -0.1 in
\end{figure*}

\vskip -0.15 in
\paragraph{Reward Shaping in RLVR.}
Reward shaping techniques augment sparse terminal signals with detailed feedback~\cite{ng1999policy,grzes2017reward,hu2020learning}. Classical potential-based methods rely on handcrafted Markovian potentials~\cite{sutton1998reinforcement}, while modern approaches employ distance rewards for navigation~\cite{ivanitskiy2023configurable}, milestone rewards for reasoning~\cite{luo2025ursa}, and hindsight relabeling for sparse rewards~\cite{velu2023hindsight}. Yet, these methods often require considerable task-specific designs or lack theoretical guarantees~\cite{ibrahim2024comprehensive,zou2019reward}. In contrast, SSL introduces a unified, task-agnostic principle: partitioning solution space into quality-ordered zones and assigning tiered rewards based on proximity to optimality. This enables generalizable, efficient learning with preservation of optimal solution ordering.

\section{Methodology}
\label{sec:method}

\subsection{Problem Formulation}
\label{subsec:formulation}
We formulate agent training as a Markov Decision Process $\mathcal{M} = (\mathcal{S}, \mathcal{A}, \mathcal{T}, R)$, where $\mathcal{S}$ denotes the state space, $\mathcal{A}$ represents the action space, $\mathcal{T}: \mathcal{S} \times \mathcal{A} \rightarrow \Delta(\mathcal{S})$ defines the state transition function, $R$ is the reward function. A trajectory $\tau = (s_0, a_0, \ldots, s_T, a_T)$ is generated by policy $\pi_\theta(a|s)$, and a task-specific verifier $C(\tau) \in \{0, 1\}$ evaluates terminal success (\emph{e.g.}, whether goals are achieved for planning tasks). Standard RLVR maximizes expected return under binary rewards:
\begin{equation}
R_{\text{bin}}(\tau) = C(\tau), \qquad J_{\text{bin}}(\theta) = \mathbb{E}_{\tau \sim \pi_\theta}[R_{\text{bin}}(\tau)].
\end{equation}

\subsection{Sweet Spot Learning}
\label{subsec:framework}
\textit{Core Principle: Actions closer to the task's sweet-spot deserve higher rewards, but tiered.}
Inspired by the tennis ``sweet spot''-where strikes nearer to the racket’s optimal zone yield stronger effects, we propose Sweet Spot Learning, which assigns progressively higher rewards to trajectories that approach task-optimal regions. The key insight is to partition the solution space into hierarchical zones based on proximity to optimality, delivering directional guidance toward high-quality solutions. We present SSL framework in Figure~\ref{framework} and pseudo-code in Algorithm~\ref{alg:ssl}.

\vspace{-0.12in}
\paragraph{Sweet Spot Learning Reward.}
We first introduce a \emph{discretized sweet-spot value} $\hat{S}(\tau)\!\in[0,1]$, where the hat means discretization. This value measures how closely a trajectory approaches high-quality completion. Combining this with binary verification yields the SSL reward:
\begin{equation}
\label{eq:ssl-general}
R_{\text{SSL}}(\tau) = C(\tau) + \alpha \, \hat{S}(\tau), \qquad \alpha > 0,
\end{equation}
where $C(\tau)\in\{0,1\}$ encodes trajectory correctness, $\hat{S}(\tau)$ guides learning toward superior solution regions, and $\alpha$ controls the strength of this guidance.

\vspace{-0.15in}
\paragraph{Predefined Components.}
SSL relies on two components that form the basis for computing trajectory sweet-spot values, which are instantiated
concretely in Section~\ref{subsec:instantiations}.

\textit{\underline{Sweet-spot zones.}} 
SSL begins by defining a finite set of ordered sweet-spot zones $\{\mathcal{Z}_k, s_k\}_{k=1}^K$, where each interval $\mathcal{Z}_k$ specifies a proximity range and $s_k$ is the sweet-spot value assigned to the region. Concretely, we parameterize zones by boundaries $0 = b_K < \cdots < b_1 < b_0 = 1$ and set $\mathcal{Z}_k = [b_{k-1}, b_k)$.

\textit{\underline{Step proximity.}} 
SSL decomposes a trajectory into step-level units for fine-grained assessment of action contributions. For each step $(s_t,a_t)$, SSL computes a proximity score $h(s_t,a_t)\in[0,1]$, which measures how well the action taken at state $s_t$ aligns with desirable task behavior.

\vspace{-0.1in}
\paragraph{Trajectory-Level Aggregation.}
Having established step-level proximity, SSL aggregates these local evaluations into a unified trajectory-level proximity measure:
\begin{equation}
    S(\tau)=\frac{1}{T+1}\sum_{t=0}^{T} h(s_t,a_t).\label{eq:solution_quality}
\end{equation}
This aggregation captures how well the entire trajectory progresses toward desirable solution regions. By bridging local and global granularities, SSL reveals gradual improvement trends that single-step signals cannot capture. This trajectory-level proximity then serves as input to subsequent discretization step that produces tiered sweet-spot values.

\vspace{-0.1in}
\paragraph{Trajectory-Proximity Score Discretization.}
The trajectory proximity $S(\tau)$ is then mapped into a tiered sweet-spot value via predefined zone-based discretization. Using the boundary $0 = b_K < \cdots < b_1 < b_0 = 1$ that define $\mathcal{Z}_k = [b_{k-1}, b_k)$, we obtain the discretized sweet-spot value
\begin{equation}
\widehat{S}(\tau)
= \sum_{k=1}^{K} s_k \,\mathbf{1}\!\left[b_{k-1} \le S(\tau) < b_k\right].
\end{equation}
While the mapping itself is simple, its effectiveness stems from the fact
that the predefined zones $(\mathcal{Z}_k)$ are constructed to reflect
task-level structure. Mapping $S(\tau)$ into these zones suppresses small and
noisy score fluctuations, yielding more discriminative reward tiers that
provide clearer learning signals for agent optimization.

\vskip -0.1in
\paragraph{Policy Optimization.}
Given SSL rewards $\{R_{\text{SSL}}(\tau_i)\}_{i=1}^N$ for sampled trajectories, we update the policy model following the standard RLVR paradigm. SSL integrates seamlessly with mainstream methods such as GRPO~\cite{guo2025deepseek} by simply replacing binary rewards with tiered SSL rewards. This differentiated feedback yields more stable gradient estimates and higher sample efficiency than binary signals.

\subsection{Agent Instantiations under SSL}
\label{subsec:instantiations}
We show SSL principle's flexibility by applying it to two broad agent classes: GUI agents that reason over visual screens to select and execute grounded actions, and complex reasoning agents that operate on structured grids (mazes, Sudoku, ARC patterns). Despite substantial differences in semantics and action spaces, SSL adapts naturally through task-aligned zone design and proximity scoring.

\vspace{-0.05in}
\paragraph{\circnumgreen{1} GUI Agents:}
\textbf{\textit{Distance-Tiered Zones.}}
In visual grounding, the agent outputs a 2D point $\mathbf{p} = (x, y)$ for a target UI element. Given ground-truth bounding box $B = (x_1, y_1, x_2, y_2)$ and gaussian spatial fields are commonly used to model human pointing behavior and motor uncertainty~\cite{mackenzie1992fitts}, we compute continuous proximity $\phi(\mathbf{p}; B) \in (0,1]$ via a Gaussian field centered at $B$ using a normalized Mahalanobis distance. Points outside $B$ receive $\phi(\mathbf{p}; B) = 0$, treating them as incorrect. This aligns with the common-correctness setting, where correctness is prioritized before sweet-spot refinement.
We parameterize the Gaussian width $\sigma$ proportional to the bounding box to
ensure that proximity judgments adapt to target scale. 

\begin{algorithm}[ht!]
\caption{Sweet Spot Learning (SSL)}
\label{alg:ssl}
\begin{algorithmic}[1]
\REQUIRE Policy $\pi_\theta$; proximity function $h(s,a) \in [0,1]$; sweet-spot zones $\{\mathcal{Z}_k, s_k\}_{k=1}^K$; coefficient $\alpha > 0$
\FOR{each training iteration}
  \STATE Sample trajectory set $\{\tau_i\}_{i=1}^N \sim \pi_\theta$
  \FOR{each trajectory $\tau_i = \{(s_t, a_t)\}_{t=0}^{T_i}$}
    \STATE $C(\tau_i) \leftarrow \text{Verifier}(\tau_i) \in \{0, 1\}$
    
    \STATE \textcolor{pseudoblue}{// Trajectory-level aggregation}
    \STATE $h_t \leftarrow h(s_t, a_t)$ for $t = 0, \ldots, T_i$
    \STATE $S(\tau_i) \leftarrow \frac{1}{T_i + 1} \sum_{t=0}^{T_i} h_t$
    
    \STATE \textcolor{pseudoblue}{// Trajectory-Proximity Score Discretization}
    \STATE Find zone index $k$ such that $S(\tau_i) \in \mathcal{Z}_k$
    \STATE $\widehat{S}(\tau_i) \leftarrow s_k$
    
    \STATE \textcolor{pseudoblue}{// SSL reward computation}
    \STATE $R_{\text{SSL}}(\tau_i) \leftarrow C(\tau_i) + \alpha \cdot \widehat{S}(\tau_i)$
  \ENDFOR
  
  \STATE Update $\theta$ using policy gradient with $\{R_{\text{SSL}}(\tau_i)\}_{i=1}^N$
\ENDFOR
\end{algorithmic}
\end{algorithm}

We then instantiate sweet-spot tiers $\{\mathcal{Z}_k, s_k\}_{k=1}^K$ by selecting thresholds $0 = b_K < \cdots < b_1 < b_0 = 1$.
Each $b_k$ corresponds to the boundary value of a $\sigma$-level band of the Gaussian field inscribed within $B$, discretizing the continuous decay into $K$
nested sweet-spot regions. A trajectory falls into tier $\mathcal{Z}_k$ if
\begin{equation}
\phi(\mathbf{p}; B) \in (b_k, b_{k-1}], \qquad \widehat{S}(\tau) = s_k = b_k,
\end{equation}
where $\mathcal{Z}_1$ denotes predictions closest to the box center and $\mathcal{Z}_K$ denotes the outermost valid region. Predictions outside $B$ are assigned $\widehat S(\tau)=0$.

\noindent\textbf{\textit{Extension to GUI Planning.}}
While sweet-spot discretization applies directly to grounding, it yields substantial downstream gains in GUI planning performance. The planner conditions on interaction history to decide the next operation; high-level action types (\emph{e.g.}, \texttt{click}, \texttt{type}) are evaluated via binary correctness $C(\tau)$, while spatial components receive graded sweet-spot feedback. We observe that refining grounding alone produces cleaner supervision signals that transfer effectively to the overall planning objective and leads to notable gains in end-to-end task success. This suggests that grounding quality is a principal bottleneck in GUI planning, and localized sweet-spot refinement is sufficient to lift higher-level decision-making.

\vspace{-0.1in}
\paragraph{\circnumgreen{2} Complex Reasoning Agents: Blockwise Sweet-Spot Construction.}
\label{sec:grid}
For complex reasoning tasks like maze navigation, Sudoku completion, and \textit{ARC-AGI}-style pattern induction, we leverage their shared grid structure to develop a unified sweet-spot construction. Despite differing semantics and domains (pathfinding for maze, constraint satisfaction for Sudoku, symbolic induction for ARC-AGI), all permit localized partial-correctness evaluation through blockwise comparison.

\noindent\textbf{\textit{Unified Blockwise Formulation.}}
For maze navigation, both predicted and reference paths are represented as binary occupancy grids $\mathbf{M}^{\text{pred}}, \mathbf{M}^{\text{ref}} \in \{0,1\}^{H \times W}$. We partition the maze into a $3 \times 3$ grid of blocks $\mathcal{B} = \{(i,j) \mid i,j \in \{1,2,3\}\}$ and count per-block matched cells $n_{i,j}$ between prediction and reference. Each block is assigned a local sweet-spot value based on match count: high (7–9 matches, $s_{i,j} = 1$), medium (4–6 matches, $s_{i,j} = 2/3$), or low (0–3 matches, $s_{i,j} = 1/3$). The trajectory-level sweet-spot score aggregates block values:
\begin{equation}
S(\tau) = \frac{1}{|\mathcal{B}|} \sum_{(i,j) \in \mathcal{B}} s_{i,j}.
\end{equation}
This blockwise construction applies directly to other tasks like Sudoku and ARC-AGI without modification. In Sudoku, $n_{i,j}$ measures digit agreement within each $3 \times 3$ subgrid, so partial correctness contributes to $S(\tau)$ even when the global configuration violates row/column constraints. In ARC-AGI, $n_{i,j}$ quantifies symbolic agreement between predicted and target patterns within each block, rewarding recovery of local structure before the full transformation rule is inferred.

While this construction aggregates block-level scores rather than computing $S(\tau)$ directly from trajectory-level features (Eq.~\eqref{eq:solution_quality}), it embodies the same sweet-spot principle: local consistency with optimal solutions is quantified and aggregated into trajectory-level feedback. This provides meaningful guidance under partial solutions and offers higher-resolution supervision, demonstrating SSL's flexibility as a general framework rather than a rigid one.

\subsection{Theoretical Analysis}
\label{subsec:theory}
We establish the theoretical analysis of SSL, demonstrating how its tiered reward structure enables finer policy discrimination and improves optimization efficiency. We define the success rate of policy $\pi_\theta$ (denoted as $\pi$ for simplicity) as $\text{SR}(\pi) = \mathbb{E}_{\tau \sim \pi}[C(\tau)]$ and the expected sweet-spot score as $\mu_S(\pi) = \mathbb{E}_{\tau \sim \pi}[S(\tau)]$. From Eq.~\eqref{eq:ssl-general}, the expected return under SSL decomposes as
\begin{equation}
\label{eq:decomp}
J_{\text{SSL}}(\pi) = \text{SR}(\pi) + \alpha \, \mu_S(\pi).
\end{equation}
This decomposition reveals that SSL simultaneously optimizes for task success and solution-space proximity, with $\alpha$ controlling the relative emphasis between these objectives.

\begin{proposition}[Quality Ordering under Equal Success Rate. Proof in Appendix A]
\label{prop:within}
For two policies with identical success rates $\text{SR}(\pi_1)=\text{SR}(\pi_2)$, we have
\[
J_{\text{SSL}}(\pi_1) > J_{\text{SSL}}(\pi_2)
\quad \iff \quad
\mu_S(\pi_1) > \mu_S(\pi_2).
\]
\end{proposition}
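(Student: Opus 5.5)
The plan is to use the decomposition in Eq.~\eqref{eq:decomp} and reduce the claim to the positivity of $\alpha$. First, I would make the decomposition precise. By linearity of expectation applied to Eq.~\eqref{eq:ssl-general}, we have $J_{\text{SSL}}(\pi)=\mathbb{E}_{\tau\sim\pi}[C(\tau)]+\alpha\,\mathbb{E}_{\tau\sim\pi}[\widehat S(\tau)]$. Both expectations are finite because $C\in\{0,1\}$ and $\widehat S\in[0,1]$ are bounded.

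There is a notational subtlety. The reward uses the discretized value $\widehat S$, whereas $\mu_S$ is defined with the continuous $S$. I would therefore read $\mu_S(\pi)$ as $\mathbb{E}_{\tau\sim\pi}[\widehat S(\tau)]$, consistent with Eq.~\eqref{eq:decomp} as the paper writes it. I would state this convention explicitly at the start of the proof.

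Next, fix $\pi_1,\pi_2$ with $\text{SR}(\pi_1)=\text{SR}(\pi_2)$ and subtract:
\[
J_{\text{SSL}}(\pi_1)-J_{\text{SSL}}(\pi_2)=\bigl(\text{SR}(\pi_1)-\text{SR}(\pi_2)\bigr)+\alpha\bigl(\mu_S(\pi_1)-\mu_S(\pi_2)\bigr)=\alpha\bigl(\mu_S(\pi_1)-\mu_S(\pi_2)\bigr).
\]
Since $\alpha>0$, multiplication by $\alpha$ preserves the sign in both directions. The left side is therefore positive if and only if $\mu_S(\pi_1)>\mu_S(\pi_2)$, which gives both implications at once.

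The argument is essentially one line. The only real obstacle is the $S$ versus $\widehat S$ issue. If one insists that $\mu_S$ use the continuous $S$, the equivalence can fail, because discretization may keep the expectation of $\widehat S$ equal while the expectation of $S$ differs. In that case I would either restate the result with $\widehat S$, or note that it holds exactly when the reward uses $S$ itself (the case $K\to\infty$, identity discretization).
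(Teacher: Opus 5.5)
Your proof is correct and is the paper's argument: subtract the decomposed objectives, cancel the equal success rates, and use $\alpha>0$ to preserve the sign in both directions. Your explicit convention $\mu_S(\pi)=\mathbb{E}_{\tau\sim\pi}[\widehat S(\tau)]$ is more careful than the appendix. The appendix simply replaces $\widehat S$ by $S$ with the remark that discretization ``preserves the ordering properties,'' and as you note this does not hold for expectations: two policies that always click inside $B$, in the same zone but at different distances from the center, have equal $J_{\text{SSL}}$ but different $\mathbb{E}[S]$.
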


\noindent\textbf{Intuition.}
Binary rewards assign identical value to policies with the same success rate and thus fail to distinguish trajectory quality.  
Sweet-spot scores recover this ordering by rewarding proximity to high-quality solutions, enabling finer policy discrimination.

\paragraph{Gradient Signal-to-Noise Ratio Enhancement.}
Beyond improving policy discrimination, SSL also enhances optimization efficiency by reshaping the advantage distribution in the policy gradient estimator. Following GRPO~\cite{guo2025deepseek}, we write
\begin{equation}
\widehat{\nabla}J
=
\frac{1}{N}\sum_{i=1}^{N} A_i\, g_i .
\end{equation}
Here $A_i := R(\tau_i)-\bar{R}$ and $g_i := \nabla_\theta \log \pi_\theta(\tau_i)$, where $\bar{R}=\frac{1}{N}\sum_{i=1}^{N} R(\tau_i)$. Thus, optimization efficiency is governed by how the advantages $\{A_i\}$ weight and select trajectory gradients $\{g_i\}$.

Binary rewards yield a low-resolution advantage signal: after subtracting the batch mean, $A_i$ takes only two discrete values, so many trajectories receive identical weights, limiting fine-grained credit assignment. In contrast, raw continuous proximity scores offer high-resolution, but often encode nuisance variations weakly aligned with useful gradient directions, increasing gradient variance.

SSL introduces an advantage obtained by centering the sweet-spot score. This advantage is \emph{continuous yet selective}: it preserves fine-grained ordering among informative trajectories while suppressing uninformative regions, achieving a better balance between resolution and variance. Consequently, SSL increases the effective information density of the policy gradient and improves its signal-to-noise ratio.

\begin{proposition}[Projected SNR Improvement. Proof in Appendix A]
\label{prop:snr}
Let $u$ be the unit direction of the binary gradient and $\ell(\tau)=\nabla_\theta \log \pi_\theta(\tau)$.  
If the sweet-spot score satisfies the positive alignment condition
\[
\mathrm{Cov}\!\left(S(\tau),\, \ell(\tau)^\top u\right)\ge 0,
\]
then
\[
\mathrm{SNR}_{\text{SSL}}(u)\ge \mathrm{SNR}_{\text{bin}}(u),
\]
where
\[
\mathrm{SNR}(u)
=
\frac{\left|\mathbb{E}[\widehat{\nabla}J^\top u]\right|}
{\sqrt{\mathrm{Var}[\widehat{\nabla}J^\top u]}}.
\]
\end{proposition}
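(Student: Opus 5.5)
We reduce everything to scalar random variables. Fix the unit vector $u$ and write $X(\tau)=\ell(\tau)^\top u$ for the projected score. At the population level, the projected gradient for a reward $R$ is $\mathbb{E}[(R-\mathbb{E}R)X]=\mathrm{Cov}(R,X)$. Its single-sample fluctuation is governed by $\mathrm{Var}[(R-\mathbb{E}R)X]$, and the $1/N$ factor from batch averaging cancels in the ratio. So we treat the batch-mean baseline as the population mean, up to $O(1/N)$ corrections.

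\textbf{Signal.} With $R_{\text{SSL}}=C+\alpha\hat S$ we get
\[
\mathrm{Cov}(R_{\text{SSL}},X)=\mathrm{Cov}(C,X)+\alpha\,\mathrm{Cov}(S,X),
\]
identifying $\hat S$ with $S$ as the statement does.

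\begin{itemize}
\item Since $u$ is the unit direction of the binary gradient, $\mathrm{Cov}(C,X)=\|\mathbb{E}\widehat\nabla J_{\text{bin}}\|\ge 0$.
\item The alignment hypothesis gives $\mathrm{Cov}(S,X)\ge 0$.
\end{itemize}

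Hence the SSL signal is at least the binary signal: $|\mathrm{Cov}(R_{\text{SSL}},X)|\ge|\mathrm{Cov}(C,X)|$. This step is clean.

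\textbf{Noise (main obstacle).} We must show the denominator does not grow faster than the numerator. In general $\mathrm{Var}[(A_{\text{bin}}+\alpha A_S)X]$ can exceed $\mathrm{Var}[A_{\text{bin}}X]$, so the monotone signal gain alone does not yield the SNR inequality. Our plan is to expand
\[
\mathrm{Var}[(A_{\text{bin}}+\alpha A_S)X]=V_b+2\alpha\,\mathrm{Cov}(A_{\text{bin}}X,A_SX)+\alpha^2\mathrm{Var}(A_SX),
\]
with $V_b=\mathrm{Var}(A_{\text{bin}}X)$, and compare $(m_b+\alpha m_s)^2/V_{\text{SSL}}$ with $m_b^2/V_b$, where $m_b=\mathrm{Cov}(C,X)$ and $m_s=\mathrm{Cov}(S,X)$. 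The inequality reduces to
\[
2\alpha m_bm_s V_b+\alpha^2 m_s^2V_b\ \ge\ m_b^2\bigl(2\alpha\,\mathrm{Cov}(A_{\text{bin}}X,A_SX)+\alpha^2\mathrm{Var}(A_SX)\bigr).
\]

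We would try two routes, in this order.

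\begin{enumerate}
\item \textbf{Regime restriction.} Note that $\hat S$ takes values in $[0,1]$ and is zero outside the correct region, since incorrect predictions get $\hat S=0$. This bounds $|A_S|\le 1$ and ties $A_S$ to $C$. Then invoke the ``selective'' structure: the tiering suppresses nuisance variance, so $\mathrm{Var}(A_SX)$ is controlled by the same second moment $\mathbb{E}[X^2]$ that appears in $V_b$.
\item \textbf{Normalization.} If route 1 fails, normalize as GRPO does, dividing advantages by their standard deviation. Equivalently, compare SNRs at matched advantage scale. Then the denominators coincide to leading order, and the result follows from the signal step. The conclusion would hold for all sufficiently small $\alpha$, since the first-order term $2\alpha m_b(m_sV_b-m_b\,\mathrm{Cov}(A_{\text{bin}}X,A_SX))$ must be nonnegative.
\end{enumerate}

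We expect to state explicitly that either a small-$\alpha$ regime or a variance-control condition is implicitly needed. The positive alignment condition alone secures the numerator, and the remaining work is making the denominator comparison rigorous.
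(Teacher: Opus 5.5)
\textbf{There is a genuine gap: the denominator step is never proved, and under the stated hypothesis it cannot be.} Your numerator argument is exactly the paper's Steps 1--2. Your reduced inequality $2\alpha m_b m_s V_b+\alpha^2 m_s^2 V_b\ge m_b^2(2\alpha c+\alpha^2 v)$, with $c=\mathrm{Cov}(A_{\text{bin}}X,A_SX)$ and $v=\mathrm{Var}(A_SX)$, is the right criterion. But neither of your routes establishes it.

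Route 1 cannot succeed. The hypothesis permits $m_s=0$; then the left side is zero while the right side is $m_b^2(2\alpha c+\alpha^2 v)$, positive for large $\alpha$ whenever $v>0$. So no upper bound on $v$ helps. Also, the gating $\hat S=0$ on failures exists only in the GUI instantiation, not in the blockwise one or in the general statement.

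Route 2 fails because the SNR is invariant under multiplying the estimator by a constant, so rescaling advantages to a common scale changes neither side. $\mathrm{Var}(AX)$ depends on the joint law of $(A,X)$, not only on $\mathrm{Var}(A)$, and $A,X$ cannot be independent or the signal would vanish. A data-dependent batch normalization would change the estimator, and hence prove a statement about something other than $\widehat\nabla J$.

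Your small-$\alpha$ remark presupposes $m_sV_b\ge m_bc$. Alignment does not imply this, and equality is not enough, as the example below shows. Even your signal step is clean only after setting $\hat S=S$: a monotone step map need not preserve the sign of $\mathrm{Cov}(\cdot,X)$.

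In fact no argument can close this gap, because the proposition is false as stated. Take a one-step task with four actions and a softmax policy at uniform parameters, so $\ell(\tau)=e_\tau-\pi$. Let $C=(1,0,0,0)$ and $S=(\tfrac14,\tfrac34,0,0)$; these are values from the paper's tier set, so $\hat S=S$. Then:
\begin{itemize}
\item $u=(3,-1,-1,-1)/\sqrt{12}$ and $X(\tau)=u_\tau$;
\item $\mathrm{Cov}(S,X)=0$, $m_b=\sqrt3/8$, $V_b=\tfrac1{16}$, $c=0$, $v=\tfrac1{128}$;
\item hence $\mathrm{SNR}_{\text{SSL}}(u)=m_b/\sqrt{1/16+\alpha^2/128}<\mathrm{SNR}_{\text{bin}}(u)$ for every $\alpha>0$.
\end{itemize}
Strict alignment does not rescue it. Replacing $S$ by $S+\tfrac14 C=(\tfrac12,\tfrac34,0,0)$ makes the covariance strictly positive. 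Yet $C+\alpha(S+\tfrac14C)$ is a positive multiple of $C+\tfrac{4\alpha}{4+\alpha}S$, so by scale invariance the reversal persists for every $\alpha>0$. With the batch baseline the same holds for large $N$: both means carry the factor $\tfrac{N-1}{N}$, and both variances are the population ones divided by $N$ up to $O(N^{-2})$.

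The paper's proof gets past this point only through Steps 3--4: the variance ``grows at most quadratically in $\alpha$'', hence the SNR inequality. That is a non sequitur, and this example refutes it; your suspicion about the denominator was correct and sharper than the paper's argument. The honest repair is to add a hypothesis rather than seek a derivation from the alignment condition alone. Either your displayed inequality (equivalent to the conclusion at the population level) or $m_sV_b>m_bc$ together with sufficiently small $\alpha$ would do.
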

\noindent\textbf{Intuition.}
The alignment condition states that higher sweet-spot scores correlate with gradient directions that improve policy performance.  
When this natural condition holds, SSL’s differentiated feedback amplifies meaningful gradient components while suppressing noise, yielding higher SNR and more stable, sample-efficient optimization.

\begin{table*}[t] 
\centering
\caption{\textbf{Performance on short-term planning tasks.} * indicates SFT on Mix-3K. All RL experiments are based on standard RLVR paradigm and trained on Mix-3K by default. $\bigtriangleup$ denotes relative improvements over RL-Binary baseline ($\frac{\text{SSL}}{\text{RL-Binary}}-1$).}
\resizebox{\linewidth}{!}{
\begin{tabular}{lccc ccc ccc ccc c}
\toprule
\multirow{2}{*}{Models} & \multicolumn{3}{c}{GUI-Act-Web} & \multicolumn{3}{c}{OmniAct-Web} & \multicolumn{3}{c}{OmniAct-Desktop} & \multicolumn{3}{c}{AndroidControl-Low} & \multirow{2}{*}{Avg.} \\
\cmidrule(lr){2-4} \cmidrule(lr){5-7} \cmidrule(lr){8-10} \cmidrule(lr){11-13}
& Type & GR & SR & Type & GR & SR & Type & GR & SR & Type & GR & SR &  \\
\midrule
\multicolumn{14}{c}{\textit{Zero Shot}}\\
\midrule
GPT-4o & 77.09 & 45.02 & 41.84 & 79.33 & 42.79 & 34.06 & 79.97 & 63.25 & 50.67 & 74.33 & 38.67 & 28.39 & 54.62 \\ 
QwenVL2.5-3B & 56.10 & 64.28 & 55.61 & 50.63 & 46.89 & 47.02 & 56.95 & 47.97 & 46.89 & 62.03 & 74.07 & 59.32 & 55.65\\ 
QwenVL2.5-7B & 86.59 & 84.39 & 78.63 & 79.15 & 71.32 & 71.21 & 84.74 & 79.89 & 79.66 & 83.44 & 87.08 & 62.50 & 79.05\\ 
\midrule
\multicolumn{14}{c}{\textit{Post-Training (Supervised Fine-Tuning or Reinforcement Learning)}}\\
\midrule
Os-Atlas-4B & 79.22 & 58.57 & 42.62 & 46.74 & 49.24 & 22.99 & 63.30 & 42.55 & 26.94 & 64.58 & 71.19 & 40.62 & 50.71 \\ 
Os-Atlas-7B & 86.95 & 75.61 & 57.02 & 85.63 & 69.35 & 59.15 & 90.24 & 62.87 & 56.73 & 73.00 & 73.37 & 50.94 & 70.07 \\ 
QwenVL2.5-3B* & 76.95 & 66.34 & 61.69 & 66.24 & 56.91 & 53.02 & 77.62 & 62.54 & 63.76 & 71.08 & 74.53 & 58.79 & 65.79 \\ 
QwenVL2.5-7B* & 87.66 & 84.77 & 79.89 & 81.62 & 73.45 & 73.39 & 86.23 & 80.17 & 79.80 & 84.00 & 85.74 & 64.32 & 80.09\\ 
UI-R1-3B & 75.89 & 79.43 & 67.31 & 75.42 & 61.35 & 61.33 & 73.41 & 64.12 & 63.98 & 79.15 & 82.41 & 66.44 & 70.85 \\
GUI-R1-3B & 89.86 & 87.42 & 76.31 & 88.58 & 75.10 & 75.08 & 91.86 & 78.37 & 78.31 & 83.68 & 81.59 & 64.41 & 80.88 \\ 
GUI-R1-7B & 90.85 & 88.06 & 80.31 & 91.16 & 77.29 & 77.35 & 92.20 & 83.36 & 83.33 & 85.17 & 84.02 & 66.52 & 83.30 \\ 
\midrule
\midrule
RL-Continous-3B & \textbf{86.38} & 85.49 & 72.69 & 78.92 & 72.56 & 72.21 & 89.83 & 78.30 & 78.30 & 82.93 & 84.08 & 68.28 & 79.16 \\
RL-Binary-3B & 82.84 & 83.19 & 75.15 & 75.98 & 68.85 & 68.60 & 79.97 & 72.20 & 72.20 & 81.78 & 80.93 & 65.77 & 75.62 \\ 
\rowcolor{mygray}
\textbf{SSL-3B (Ours)} & 85.77 & \textbf{87.52} & \textbf{85.22} & \textbf{81.95} & \textbf{73.45} & \textbf{73.39} & \textbf{91.53} & \textbf{80.17} & \textbf{80.17} & \textbf{83.57} & \textbf{84.63} & \textbf{81.50} & \textbf{82.41} \\ 
\rowcolor{pink}
\textbf{$\bigtriangleup$ $(\uparrow, \%)$} & +3.5 & +5.2 & +13.4 & +7.9 & +6.7 & +7.0 & +14.5 & +11.0 & +11.0 & +2.2 & +4.6 & +23.9 & +9.0 \\
\midrule
RL-Continous-7B & 94.15 & \textbf{89.25} & 86.46 & 93.73 & 76.78 & 76.78 & 91.69 & 81.35 & 82.04 & 83.12 & 83.34 & 65.81 & 83.70 \\ 
RL-Binary-7B & 90.78 & 88.06 & 80.31 & 91.16 & 75.69 & 75.32 & 91.52 & 81.35 & 81.35 & 81.87 & 81.59 & 64.75 & 81.97 \\ 
\rowcolor{mygray}
\textbf{SSL-7B (Ours)} & \textbf{94.54} & \textbf{89.25} & \textbf{87.23} & \textbf{95.39} & \textbf{77.73} & \textbf{77.43} & \textbf{92.38} & \textbf{83.39} & \textbf{83.39} & \textbf{85.17} & \textbf{87.08} & \textbf{70.72} & \textbf{85.31} \\ 
\rowcolor{pink}
\textbf{$\bigtriangleup$ $(\uparrow, \%)$} & +4.1 & +1.4 & +8.6 & +4.6 & +2.7 & +2.8 & +0.9 & +2.5 & +2.5 & +4.0 & +6.7 & +9.2 & +4.1 \\
\bottomrule
\end{tabular}}
\label{tab:low_level}
\vspace{-0.1in}
\end{table*} 

\vspace{-0.05in}
\paragraph{Sweet-Spot Shaping vs. Dense Rewards.}
Dense rewards impose a monotonic objective that uniformly encourages proximity minimization. 
In contrast, SSL is not designed to optimize distance itself, but to selectively concentrate learning signals within an informative region of the solution space, referred to as the \emph{sweet-spot region}. 
By suppressing trivial and uninformative areas and emphasizing this region, SSL acts as a \emph{selective shaping} mechanism rather than a dense reward formulation.

\section{Experiments}
\label{sec:experiments}

\subsection{Experimental Setup}
\label{subsec:setup}
\paragraph{Baselines.}
\textit{For GUI tasks}, we compare to: (i) RL-binary, a GRPO-based RLVR with binary rewards; (ii) RL-continuous, a continuous reward-based RLVR from GUI-G$^2$~\cite{tang2025gui}; and (iii) other baselines like GUI-R1~\cite{luo2025gui}. \textit{For complex reasoning}, we compare with RL-binary and RL-continuous (block-wise relatively continuous reward).

\vspace{-0.06in}
\paragraph{Training Datasets.}
We use the GUI training data (Mix-3K) from GUI-R1-3K~\cite{luo2025gui} and obtain 3K examples spanning both web and mobile interfaces. For generalization analysis, we extract single-step perception data from this collection, yielding approximately 2K samples (Perception-2K). For other tasks (\emph{e.g.}, maze), we use the standard training splits from their respective datasets.

\vspace{-0.1in}
\paragraph{Evaluation Benchmarks.}
We evaluate four tasks across twelve benchmarks: (1) \textit{short-term planning}: GUI-Act-Web~\cite{guiact}, OmniAct-Web~\cite{kapoor2024omniact}, OmniAct-Desktop~\cite{kapoor2024omniact}, AndroidControl-Low~\cite{androidcontrol}; (2) \textit{long-term planning}: AndroidControl-High~\cite{androidcontrol}, GUI-Odyssey~\cite{lu2025guiodyssey}; (3) \textit{fine-grained perception}: Screenspot~\cite{cheng-etal-2024-seeclick}, Screenspot-Pro~\cite{li2025screenspot}; and (4) \textit{complex reasoning}: Sudoku Solving~\cite{sudoku-extreme}, Maze Navigation~\cite{ivanitskiy2023configurable}, ARC-AGI-1~\cite{chollet2019measure}, and ARC-AGI-2~\cite{chollet2025arc}.

\vspace{-0.1in}
\paragraph{Evaluation Metrics.}
For GUI planning tasks, we follow OS-Atlas~\cite{wu2025osatlas} and report action-type accuracy (Type), grounding accuracy (GR), and step success rate (SR). Type measures exact matches of predicted action types (\emph{e.g.}, \texttt{click}, \texttt{scroll}); GR evaluates GUI element grounding; SR computes step-wise success where both the action and its arguments (\emph{e.g.}, coordinates, input text) must be correct. For other tasks, we report accuracy as the primary metric.

\vspace{-0.1in}
\paragraph{Implementation Details.}
For SFT, we use QwenVL2.5-3B/7B~\cite{qwen25vl} with LLaMAFactory~\cite{zheng-etal-2024-llamafactory}, training for one epoch. RL experiments adopt EasyR1~\cite{zheng2025easyr1} for over three epochs with $\alpha = 0.2$ (SSL). We apply the same zero-shot inference prompts across all methods for fair comparison.

\begin{figure}[htp!]
\includegraphics[width=\linewidth]{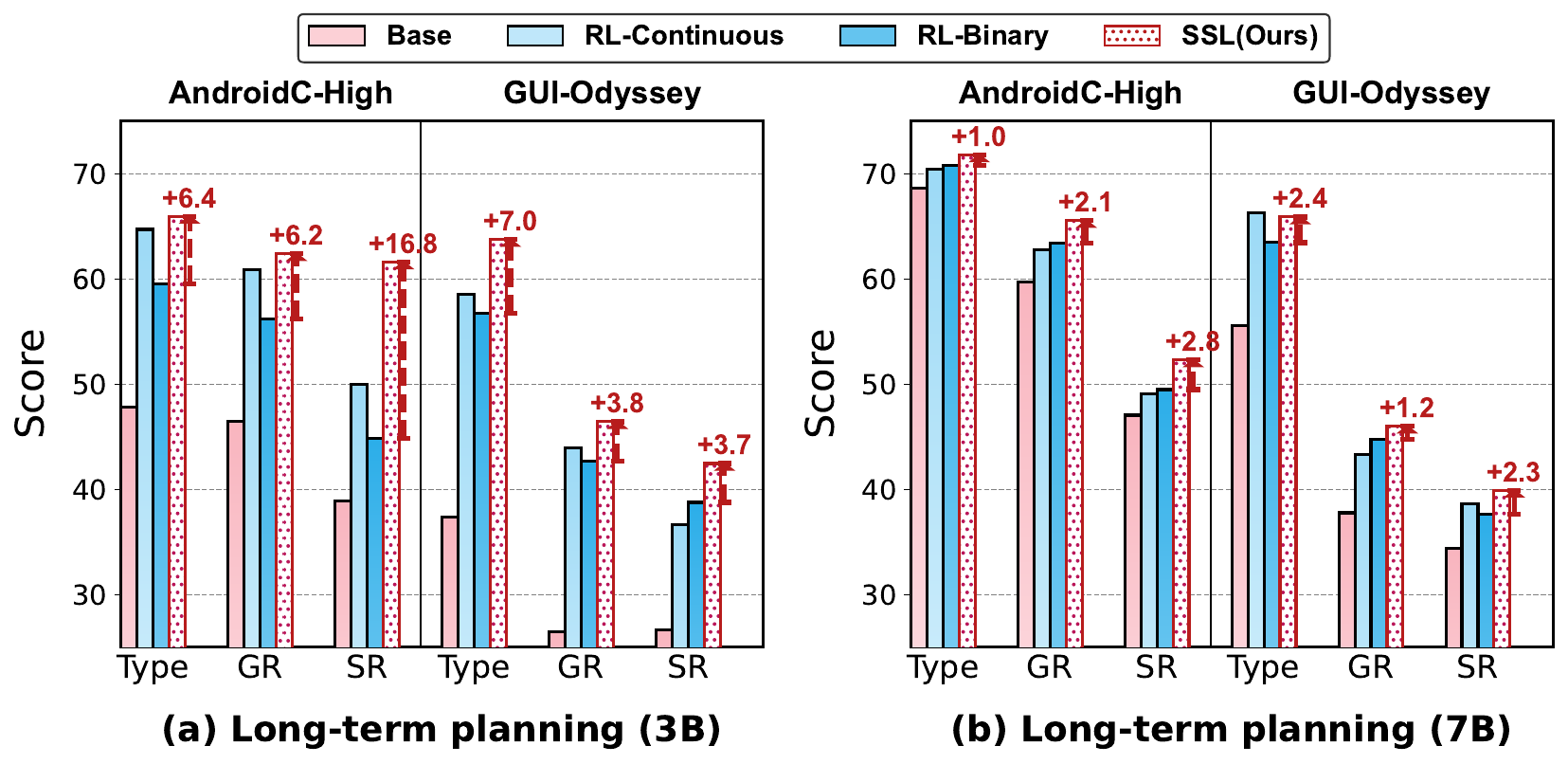}
\caption{\textbf{Performance on long-term planning tasks.} We report Type, GR, SR (\%) across two challenging benchmarks.}
\label{long_term_planning}
\vskip -0.1 in
\end{figure}

\begin{figure}[htp!]
\includegraphics[width=\linewidth]{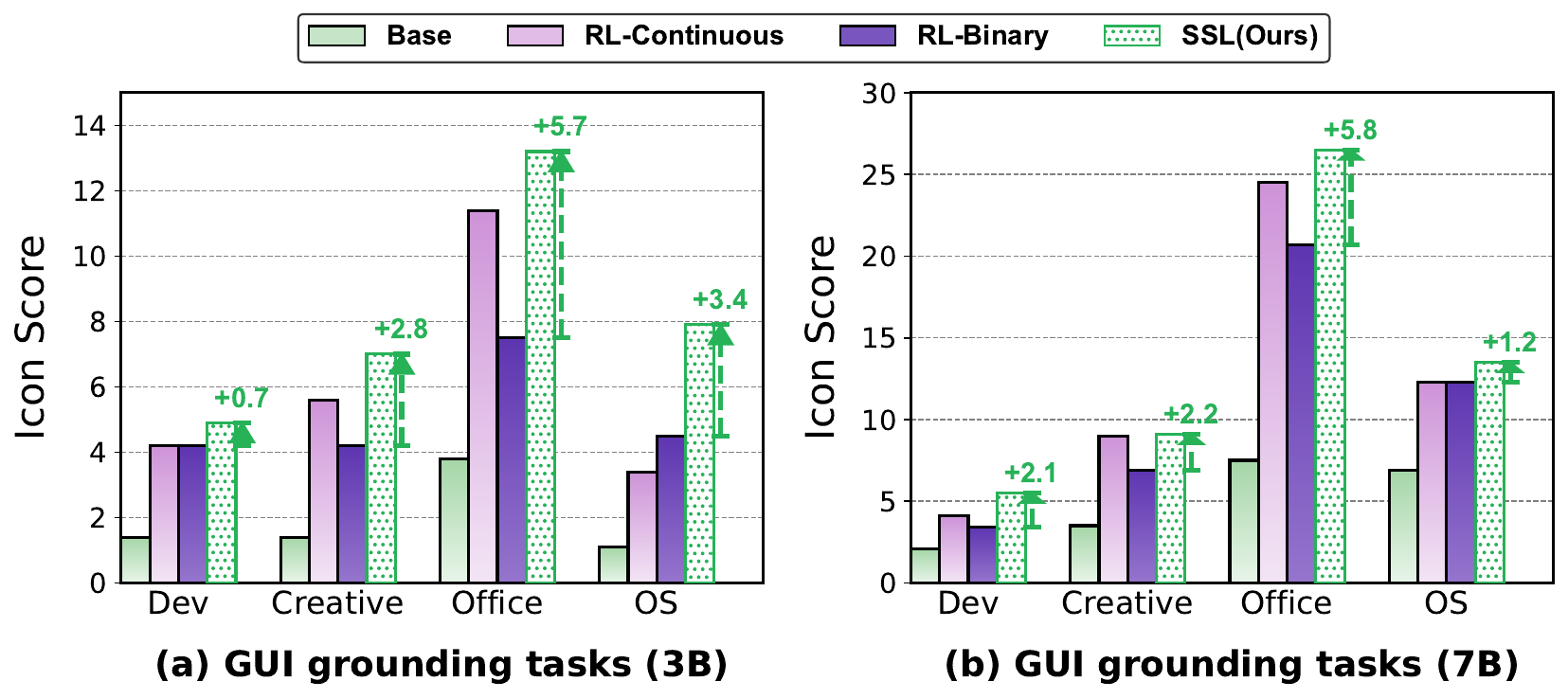}
\caption{\textbf{Performance on GUI perception tasks.} We report accuracy across different interface platforms like Development and Programming, Creative, Office and Operating System.}
\label{grouding}
\vskip -0.1 in
\end{figure}

\subsection{Performance Enhancement}\label{subsec:performance}
We use GUI-based tasks as primary benchmarks due to their multimodal nature, coupling language understanding, fine-grained visual grounding, and sequential decision making. This provides a rigorous testbed for SSL's robustness and adaptability. \textit{Our goal is to expose key phenomena and establish a general design principle for RLVR-based agent training, rather than to chase absolute state-of-the-art.}

\vspace{-0.1in}
\paragraph{GUI Agent Planning.}
Tables~\ref{tab:low_level} and Figure~\ref{long_term_planning} present our main results on agent planning tasks, where SSL shows substantial and consistent improvements across both short-term and long-term scenarios. We have three key findings:

\begin{table}[t]
\centering
\caption{\textbf{Performance on complex reasoning tasks.} We evaluate SSL on Sudoku, Maze navigation, and ARC-AGI using Qwen2.5-3/7B. $\bigtriangleup$ denotes relative improvement over RL-Binary (B/A-1).}
\resizebox{\linewidth}{!}{
\begin{tabular}{lccccc}
\toprule
Method & Sudoku & Maze & ARC-1 & ARC-2 & Avg. \\
\midrule
RL-Continuous-3B & 17.3 & 62.3 & 30.9 & 17.9 & 32.1 \\
RL-Binary-3B & 15.5 & 55.8 & 28.3 & 14.9 & 28.6 \\
SSL-3B (Ours) & \textbf{31.0} & \textbf{72.5} & \textbf{33.3} & \textbf{20.3} & \textbf{40.0} \\
\rowcolor{pink}
\textbf{$\bigtriangleup$ $(\uparrow, \%)$} & +100.0 & +30.1 & +17.7 & +36.1 & +39.8 \\
\midrule
RL-Continuous-7B & 45.0 & 85.9 & 53.7 & 38.4 & 55.7 \\
RL-Binary-7B & 44.7 & 85.7 & 52.8 & 39.0 & 55.3 \\
SSL-7B (Ours) & \textbf{45.4} & \textbf{86.6} & \textbf{58.2} & \textbf{40.3} & \textbf{57.2} \\
\rowcolor{pink}
\textbf{$\bigtriangleup$ $(\uparrow, \%)$} & +7.1 & +1.0 & +10.0 & +3.3 & +3.4 \\
\bottomrule
\end{tabular}}
\label{tab:complex_reasoning}
\vskip -0.15 in
\end{table}

\vspace{-0.1in}
\ding{71} \textit{SSL steers policies toward the sweet-spot region of solution space.} On short-term planning, SSL-3B attains an 82.41\% average accuracy, exceeding RL-Binary-3B by 9.0\%. On long-term planning, SSL-3B reaches 57.11\%, a 14.6\% gain. Unlike binary rewards that provide undifferentiated guidance or continuous shaping that introduces gradient noise, SSL's zone-tiered guidance offers stable feedback progressively refining policies toward optimal solutions. It particularly benefits long-horizon scenarios (\emph{e.g.}, +37.4\% SR relative gains on AndroidControl-High). Detailed long-term planning results are provided in the Appendix.

\ding{71} \textit{SSL particularly strengthens spatial grounding.} The most pronounced improvements mainly appear in GR and SR: \emph{e.g.}, +11.0\% GR/SR on OmniAct-Desktop and +23.9\% SR on AndroidControl-Low (3B). Since SR requires both correct action types and precise spatial parameters, these gains reveal that distance-tiered zones alleviate the grounding bottleneck, enabling better task success.

\begin{figure}[t!]
\includegraphics[width=\linewidth]{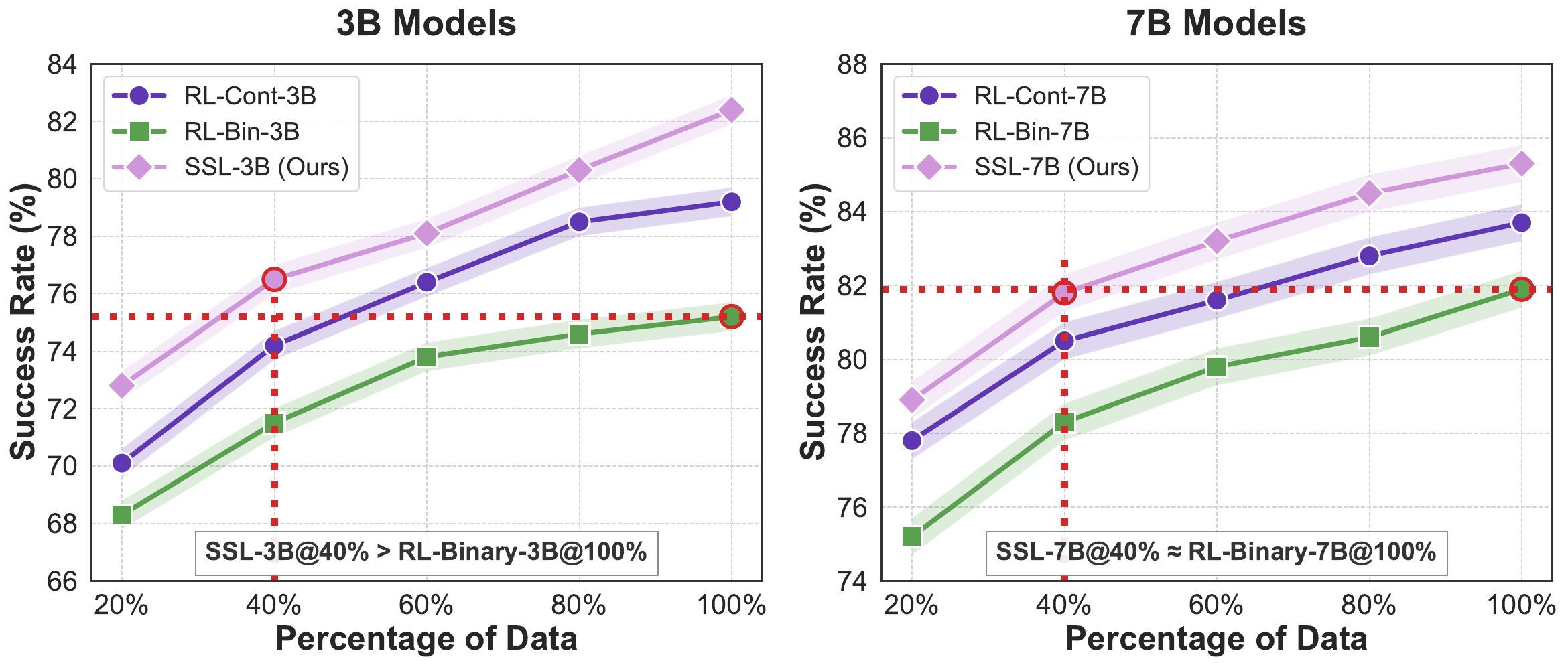}
\caption{\textbf{Sample efficiency analysis.} We report average success rate (\%) across four short-term planning benchmarks.}
\label{sample_efficiency}
\vskip -0.15 in
\end{figure}

\ding{71} \textit{SSL scales across model sizes and planning horizons.} SSL-7B achieves 85.31\% (short-term) and 56.92\% (long-term) average accuracy, improving over RL-Binary-7B by +4.1\% and +3.6\%, respectively. Benefits hold across model scales, platforms, and varying horizons, supporting SSL as a general principle for agent optimization.

\vspace{-0.1in}
\paragraph{Fine-Grained Perception.}
Figure~\ref{grouding} further evaluates the model's ability to precisely localize UI elements across different interface platforms (\emph{e.g.}, Dev, OS). SSL consistently outperforms the baselines on ScreenSpot-Pro. For example, SSL-3B achieves +5.7\% gains on Office type, with similar improvements on the 7B model. These results indicate that zone-tiered guidance largely benefits precise spatial alignment and multimodal representations, which also explains why SSL yields strong performance in complex planning tasks. Detailed grounding results are provided in Appendix.

\begin{table*}[htbp]
\centering
\caption{\textbf{Cross-task transfer from perception to planning.} We train models on GUI perception tasks and evaluate on both short-term and long-term planning benchmarks. $\bigtriangleup$ denotes relative improvements over RL-Binary baseline ($\frac{\text{SSL}}{\text{RL-Binary}}-1$).}
\resizebox{\linewidth}{!}{
\begin{tabular}{lccc ccc ccc ccc c}
\toprule
& \multicolumn{6}{c}{\cellcolor{blue1}\textit{Short-Term Planning}} 
& \multicolumn{6}{c}{\cellcolor{lightgreen3}\textit{Long-Term Planning}} & \\
\cmidrule{2-7} \cmidrule{8-13}
\multirow{2}{*}{Method} & \multicolumn{3}{c}{GUI-Act-Web} & \multicolumn{3}{c}{AndroidControl-Low} & \multicolumn{3}{c}{AndroidControl-High} & \multicolumn{3}{c}{GUI-Odyssey} & \multirow{2}{*}{Avg.} \\
\cmidrule(lr){2-4} \cmidrule(lr){5-7} \cmidrule(lr){8-10} \cmidrule(lr){11-13}
& Type & GR & SR & Type & GR & SR & Type & GR & SR & Type & GR & SR & \\
\midrule
RL-Binary-3B & 74.4 & 80.2 & 67.9 & 73.9 & 78.8 & 60.1 & 58.0 & 52.2 & 41.0 & 60.4 & 40.2 & 37.7 & 55.4 \\
SSL-3B (Ours) & \textbf{78.2} & \textbf{83.1} & \textbf{73.3} & \textbf{77.9} & \textbf{85.4} & \textbf{65.8} & \textbf{59.6} & \textbf{57.6} & \textbf{55.6} & \textbf{62.8} & \textbf{42.7} & \textbf{39.8} & \textbf{61.8} \\
\rowcolor{pink}
\textbf{$\bigtriangleup$ $(\uparrow, \%)$} & +5.2 & +3.7 & +8.0 & +5.4 & +8.4 & +9.5 & +2.8 & +10.4 & +35.5 & +4.1 & +6.2 & +5.7 & +11.7 \\
\midrule
RL-Binary-7B & 92.4 & 88.2 & 76.3 & 81.9 & 82.0 & 63.2 & 70.5 & 60.5 & 55.3 & 72.0 & 46.5 & 41.2 & 69.2 \\
SSL-7B (Ours) & \textbf{94.3} & \textbf{89.0} & \textbf{86.8} & \textbf{82.9} & \textbf{86.2} & \textbf{72.5} & \textbf{71.6} & \textbf{64.5} & \textbf{58.5} & \textbf{72.0} & \textbf{46.2} & \textbf{42.5} & \textbf{72.2} \\
\rowcolor{pink}
\textbf{$\bigtriangleup$ $(\uparrow, \%)$} & +2.1 & +0.9 & +13.7 & +1.3 & +5.0 & +14.8 & +1.6 & +6.5 & +5.9 & +0.0 & -0.7 & +3.0 & +4.5 \\
\bottomrule
\end{tabular}
}
\label{tab:generalization}
\vskip -0.1in
\end{table*}

\begin{table}[t]
\centering
\caption{\textbf{Ablation on zone granularity ($K$).} We report average success rate (\%) on planning benchmarks for both models.}
\resizebox{\linewidth}{!}{
\begin{tabular}{cccccccc}
\toprule
\multirow{2}{*}{$K$} & \multicolumn{3}{c}{Short-Term} & \multicolumn{3}{c}{Long-Term} & \multirow{2}{*}{Overall} \\
\cmidrule(lr){2-4} \cmidrule(lr){5-7}
 & 3B & 7B & Avg. & 3B & 7B & Avg. & \\
\midrule
2 & 77.3 & 82.6 & 80.0 & 51.6 & 55.8 & 53.7 & 66.9 \\
\rowcolor{pink}
4 & \textbf{82.4} & \textbf{85.3} & \textbf{83.9} & \textbf{57.1} & \textbf{57.0} & \textbf{57.1} & \textbf{70.5} \\
8 & 80.5 & 84.7 & 82.6 & 56.2 & 56.7 & 56.5 & 69.6 \\
\bottomrule
\end{tabular}}
\label{tab:ablation_zones}
\vspace{-0.1in}
\end{table}

\vspace{-0.1in}
\paragraph{Complex Reasoning.}
Table~\ref{tab:complex_reasoning} reports results on challenging reasoning tasks, including Sudoku, Maze Navigation, and ARC-AGI. For these tasks, RL-continuous uses cell-level binary correctness as continuous rewards (e.g., counting matched cells across the grid), while RL-binary uses only trajectory-level success/failure signals. SSL delivers substantial gains across all benchmarks, with especially large improvements on the 3B model, where limited capacity amplifies the benefit of differentiated rewards. For example, the +100.0\% gain on Sudoku illustrates SSL's strength: blockwise sweet-spot scoring rewards partial constraint satisfaction (\emph{e.g.}, reducing violations from 20 to 5), while binary rewards remain silent until full correctness. Notably, consistent improvements across Sudoku (constraint-based), Maze (path-finding), and ARC (pattern induction) show that SSL's tiered reward principle generalizes across domains.

\subsection{Sample Efficiency Analysis}
\label{subsec:efficiency}
Figure~\ref{sample_efficiency} examines SSL's sample efficiency across training set proportions from 20\% to 100\%. SSL demonstrates superior efficiency at all data scales: with only 40\% of training data, SSL-3B matches or exceeds RL-Binary-3B trained on the full dataset, achieving up to 2.5$\times$ data efficiency. This gain stems from SSL's ability to extract richer learning signals from each trajectory: while binary rewards provide feedback only for successful outcomes, SSL's tiered scoring enables learning from the quality distribution of both successful and near-successful attempts. As data increases from 20\% to 100\%, SSL maintains its advantage. Consistent gains reveal that sample efficiency is a fundamental property of SSL rather than an artifact of model capacity.

\subsection{Cross-Task Transferability}\label{subsec:generalization}
Figure~\ref{head_radar} presents a comprehensive performance overview across short-/long-term planning, and complex reasoning. SSL consistently outperforms binary-reward baselines across all tasks, demonstrating that the sweet-spot principle generalizes across diverse agent tasks. This validates our core thesis: tiered, proximity-aligned reward modeling provides more differentiated guidance for agent optimizatoin.

To further assess within-domain transfer, we train SSL on Perception-2K and evaluate on GUI planning tasks. As shown in Table~\ref{tab:generalization}, SSL-3B achieves significantly outperforms RL-Binary, with similar gains for 7B. These results suggest that distance-tiered rewards learned from static grounding could transfer to spatial reasoning in equential decision-making tasks. SSL's zone-based feedback helps agents prioritize proximity to task-relevant regions, a transferable skill across planning horizons.

\subsection{Ablation Study on Zone Granularity $K$}
\label{subsec:ablation}
We investigate the impact of zone numbers on SSL's performance. Table~\ref{tab:ablation_zones} shows results with $K \in \{2, 4, 8\}$ on GUI planning tasks. Using $K=4$ zones achieves optimal balance: too few zones ($K=2$) provide insufficient differentiation similar to binary rewards, while excessive zones ($K=8$) may introduce noise from over-segmentation.

\begin{figure}
\includegraphics[width=\linewidth]{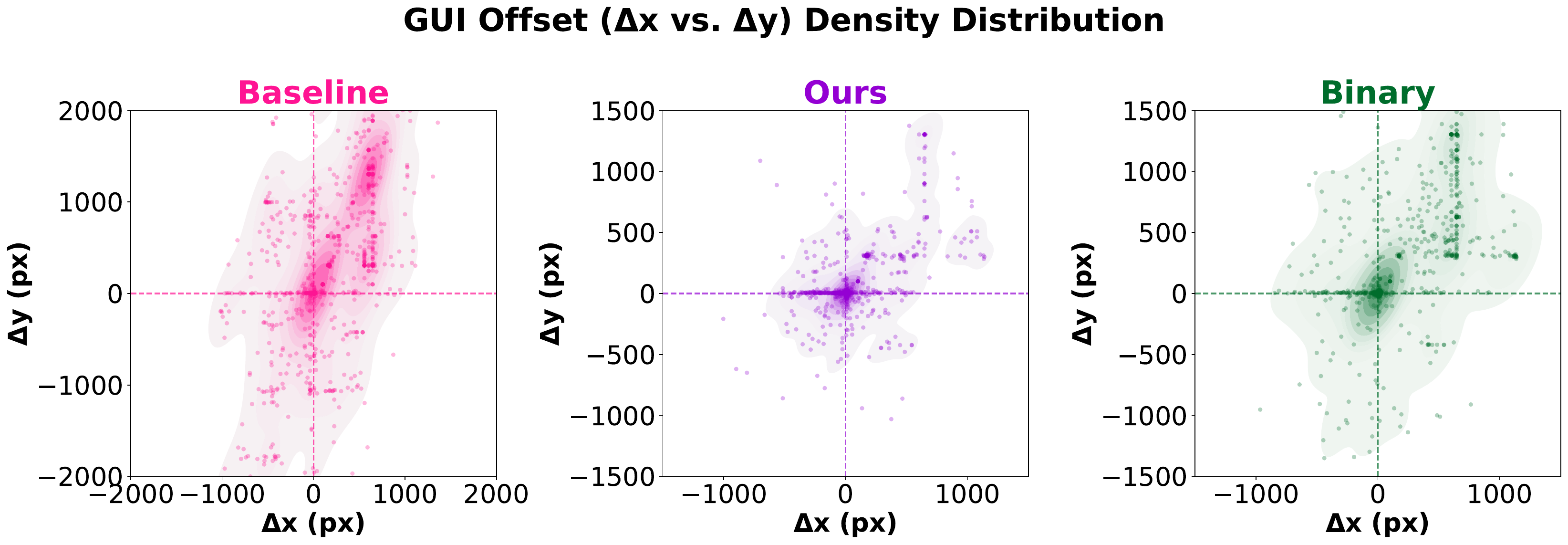}
\caption{\textbf{Offset Distributions in GUI Grounding.} We visualize the $\Delta x$–$\Delta y$ offset distributions for the base model, binary GRPO, and SSL models after filtering out invalid predictions.}
\label{fig_gui}
\vspace{-0.2in}
\end{figure}

\vspace{-0.06in}
\subsection{Qualitative Analysis}\label{subsec:gui_heatmap}
Figure~\ref{fig_gui} visualizes the $\Delta x$-$\Delta y$ offsets before and after applying the binary and sweet-spot reward. Base model (Baseline) yields diffuse patterns, indicating imprecise localization. Compared with binary shaping, SSL produces a far more concentrated cluster, with peripheral sparsity driven by dense accumulation near the center. This shows that tiered grounding rewards offer properly finer-grained guidance than binary reward shaping, resulting in more centered GUI localization and supporting improved GUI planning.
\vspace{-0.1in}
\section{Conclusion}
\label{sec:conclusion}
We introduce Sweet Spot Learning, a unified framework that provides differentiated guidance via tiered, proximity-aligned rewards. By discretizing solution spaces into hierarchical zones, SSL addresses fundamental limitations of binary rewards: optimization ambiguity, learning inefficiency, and policy fragility. Experiments demonstrate consistent gains across diverse tasks, suggesting SSL as a general principle for intelligent agentic optimization.
\section*{Impact Statement}
\paragraph{Ethical Considerations.} 
We believe that \textsc{SSL} raises no ethical concerns regarding its motivation, design, or implementation. The framework operates within the existing RLVR paradigm using verifiable reward signals that provide objective feedback without subjective biases. Our method does not introduce novel data collection or architectural requirements that would raise privacy or safety concerns beyond standard agent post-training. All experiments adhere to responsible AI practices, including transparency and reproducibility. While improved agent capabilities could potentially be misused, \textsc{SSL} is a general reward shaping principle without inherent malicious intent.

\paragraph{Societal Implications.} 
\textsc{SSL} has the potential to improve accessibility to capable agent systems by enabling more sample-efficient training. By providing differentiated guidance through tiered rewards, our framework could accelerate progress in assistive technologies, accessibility tools, and productivity applications. The improved sample efficiency reduces computational costs, lowering barriers for resource-constrained researchers and practitioners. However, more capable autonomous agents may raise concerns around accountability, over-reliance on automation, and equitable access. We encourage responsible development of \textsc{SSL}-trained agents with attention to safety evaluation, human oversight mechanisms, and fair deployment practices.

\nocite{langley00}

\bibliography{example_paper}
\bibliographystyle{icml2026}

\newpage

\appendix
\onecolumn

\section{Theoretical Proofs}
\label{app:proofs}

In this section, we provide detailed proofs for the theoretical propositions presented in Section 2.4. We first establish notation and then prove each proposition in order.

\subsection{Notation and Preliminaries}

We work within the MDP framework $\mathcal{M} = (\mathcal{S}, \mathcal{A}, \mathcal{T}, R)$ with policy $\pi_\theta$ (abbreviated as $\pi$ when context is clear). Key quantities include:
\begin{itemize}
    \item \textbf{Binary correctness}: $C(\tau) \in \{0,1\}$ indicates task completion
    \item \textbf{Trajectory proximity}: $S(\tau) \in [0,1]$ measures solution quality via Eq.~\eqref{eq:solution_quality}
    \item \textbf{Discretized sweet-spot value}: $\widehat{S}(\tau) \in [0,1]$ obtained via zone-based discretization
    \item \textbf{Success rate}: $\text{SR}(\pi) = \mathbb{E}_{\tau \sim \pi}[C(\tau)]$
    \item \textbf{Expected sweet-spot score}: $\mu_S(\pi) = \mathbb{E}_{\tau \sim \pi}[S(\tau)]$
\end{itemize}

The SSL reward is defined as:
\[
R_{\text{SSL}}(\tau) = C(\tau) + \alpha \, \widehat{S}(\tau), \quad \alpha > 0,
\]
yielding expected return:
\[
J_{\text{SSL}}(\pi) = \mathbb{E}_{\tau \sim \pi}[R_{\text{SSL}}(\tau)] = \text{SR}(\pi) + \alpha \, \mathbb{E}_{\tau \sim \pi}[\widehat{S}(\tau)].
\]

Since $\widehat{S}(\tau)$ is a discretized version of $S(\tau)$ and both take values in $[0,1]$, we use $S(\tau)$ in our analysis for simplicity, noting that discretization preserves the ordering properties. Thus:
\[
J_{\text{SSL}}(\pi) = \text{SR}(\pi) + \alpha \, \mu_S(\pi).
\]

\subsection{Proof of Proposition 2.1: Quality Ordering}
\noindent\textbf{Proposition 2.1 (Quality Ordering under Equal Success Rate)}. \textit{For two policies $\pi_1$ and $\pi_2$ with identical success rates $\text{SR}(\pi_1) = \text{SR}(\pi_2)$, we have
\[
J_{\text{SSL}}(\pi_1) > J_{\text{SSL}}(\pi_2) \quad \iff \quad \mu_S(\pi_1) > \mu_S(\pi_2).
\]}

\begin{proof}
Given $\text{SR}(\pi_1) = \text{SR}(\pi_2)$, we compare the SSL objectives:
\[
\begin{aligned}
J_{\text{SSL}}(\pi_1) &- J_{\text{SSL}}(\pi_2) \\
&= \bigl[\text{SR}(\pi_1) + \alpha \mu_S(\pi_1)\bigr]- \bigl[\text{SR}(\pi_2) + \alpha \mu_S(\pi_2)\bigr] \\
&= \text{SR}(\pi_1) - \text{SR}(\pi_2) + \alpha \bigl[\mu_S(\pi_1) - \mu_S(\pi_2)\bigr] \\
&= \alpha \bigl[\mu_S(\pi_1) - \mu_S(\pi_2)\bigr].
\end{aligned}
\]

where the last equality follows from $\text{SR}(\pi_1) = \text{SR}(\pi_2)$.

Since $\alpha > 0$ by definition:
\[
\begin{aligned}
J_{\text{SSL}}(\pi_1) > J_{\text{SSL}}(\pi_2) & \quad \iff \quad \alpha \bigl[\mu_S(\pi_1) - \mu_S(\pi_2)\bigr] > 0 \\
& \quad \iff \quad \mu_S(\pi_1) > \mu_S(\pi_2).
\end{aligned}
\]

This completes the proof. The proposition shows that SSL strictly preserves the quality ordering induced by sweet-spot scores among policies with equal success rates, enabling finer policy discrimination than binary rewards alone.
\end{proof}

\subsection{Proof of Proposition 2.2: SNR Enhancement}

We adopt the GRPO-style policy gradient estimator with $N$ sampled trajectories:
\[
\widehat{\nabla} J = \frac{1}{N} \sum_{i=1}^N \bigl(R(\tau_i) - \bar{R}\bigr) \nabla_\theta \log \pi_\theta(\tau_i),
\]
where $\bar{R} = \frac{1}{N}\sum_{i=1}^N R(\tau_i)$ is the sample baseline. For any unit direction $u \in \mathbb{R}^d$ with $\|u\| = 1$, we define the projected signal-to-noise ratio:
\[
\text{SNR}(u) = \frac{\left|\mathbb{E}[\widehat{\nabla} J^\top u]\right|}{\sqrt{\text{Var}[\widehat{\nabla} J^\top u]}}.
\]

\noindent\textbf{Proposition 2.2 (Projected SNR Improvement).}

\noindent We use the shorthand 
\[
\ell(\tau) = \nabla_\theta \log \pi_\theta(\tau),
\]
the score function of the trajectory, which satisfies the score-function identity 
$\mathbb{E}[\ell(\tau)] = 0$.

\noindent Let 
\[
g_{\mathrm{bin}} = \mathbb{E}\left[(C(\tau)-\bar C)\,\ell(\tau)\right],
\]
and let
\[
u = \frac{g_{\mathrm{bin}}}{\|g_{\mathrm{bin}}\|}.
\]
If
\[
\operatorname{Cov}\left(S(\tau),\, \ell(\tau)^\top u\right) \ge 0,
\]
then
\[
\operatorname{SNR}_{\mathrm{SSL}}(u) \ge \operatorname{SNR}_{\mathrm{bin}}(u).
\]

\begin{proof}
We use the score–function identity:
\[
\mathbb{E}[\ell(\tau)] = 0.
\]

\paragraph{Step 1: Projected Mean Gradient.}
The SSL gradient can be written as
\[
\mathbb{E}[\widehat{\nabla}J_{\mathrm{SSL}}]
= \mathbb{E}[(C(\tau)-\bar C)\ell(\tau)]
+ \alpha\,\mathbb{E}[(S(\tau)-\bar S)\ell(\tau)].
\]
Projecting onto $u$ gives
\[
\mathbb{E}[\widehat{\nabla}J_{\mathrm{SSL}}^\top u]
= \mathbb{E}[(C(\tau)-\bar C)\ell(\tau)^\top u]
+ \alpha\,\mathbb{E}[(S(\tau)-\bar S)\ell(\tau)^\top u].
\]
Using $\mathbb{E}[\ell^\top u]=0$, we obtain
\[
\mathbb{E}[(C(\tau)-\bar C)\,\ell(\tau)^\top u]
= \operatorname{Cov}(C(\tau), \ell(\tau)^\top u).
\]
Since $u = g_{\mathrm{bin}}/\|g_{\mathrm{bin}}\|$, we have
\[
\operatorname{Cov}(C(\tau), \ell(\tau)^\top u)
= \|g_{\mathrm{bin}}\| > 0.
\]

\paragraph{Step 2: Signal Improvement.}
The SSL signal is
\[
\mathbb{E}[\widehat{\nabla}J_{\mathrm{SSL}}^\top u]
= \operatorname{Cov}(C,\ell^\top u)
+ \alpha\,\operatorname{Cov}(S,\ell^\top u).
\]
By the alignment assumption, 
\[
\operatorname{Cov}(S(\tau),\ell(\tau)^\top u)\ge 0,
\]
and with $\alpha>0$, this implies
\[
\left|\mathbb{E}[\widehat{\nabla}J_{\mathrm{SSL}}^\top u]\right|
\ge
\left|\operatorname{Cov}(C(\tau),\ell(\tau)^\top u)\right|.
\]
Since
\[
\mathbb{E}[\widehat{\nabla}J_{\mathrm{bin}}^\top u]
= \operatorname{Cov}(C(\tau), \ell(\tau)^\top u),
\]
we obtain
\[
\left|\mathbb{E}[\widehat{\nabla}J_{\mathrm{SSL}}^\top u]\right|
\ge
\left|\mathbb{E}[\widehat{\nabla}J_{\mathrm{bin}}^\top u]\right|.
\]

\paragraph{Step 3: Controlled Variance.}
Let
\[
X = (C(\tau)-\bar C)\,\ell(\tau)^\top u,
\]
\[
Y = (S(\tau)-\bar S)\,\ell(\tau)^\top u.
\]
The SSL variance is
\[
\operatorname{Var}(X + \alpha Y)
= \operatorname{Var}(X)
+ \alpha^{2}\operatorname{Var}(Y)
+ 2\alpha\,\operatorname{Cov}(X,Y).
\]
Since $0\le S(\tau)\le 1$ and $|\ell(\tau)^\top u|$ matches the binary case, each term is bounded, implying that the variance grows at most quadratically in $\alpha$.

\paragraph{Step 4: SNR Comparison.}
The projected SNR is
\[
\operatorname{SNR}_\ast(u)
=
\frac{
\left|\mathbb{E}[\widehat{\nabla}J_\ast^\top u]\right|
}{
\sqrt{\operatorname{Var}[\widehat{\nabla}J_\ast^\top u]}
}.
\]
Combining Step~2 and the bounded variance in Step~3 yields
\[
\operatorname{SNR}_{\mathrm{SSL}}(u)
\ge
\operatorname{SNR}_{\mathrm{bin}}(u).
\]
\end{proof}

\subsection{Discussion: Practical Implications}

The theoretical results provide important insights for SSL's design and application:

\begin{itemize}
    \item \textbf{Proposition 2.1} guarantees that SSL can distinguish policy quality beyond binary success metrics, enabling more nuanced optimization even when policies achieve similar task completion rates.
    
    \item \textbf{Proposition 2.2} ensures that when sweet-spot scores align with gradient directions—a natural condition in well-designed tasks—SSL improves gradient quality, leading to faster convergence and better sample efficiency.
    
    \item \textbf{Hyperparameter selection}: The coefficient $\alpha$ controls the balance between binary correctness and sweet-spot guidance. Our experiments show that $\alpha \in [0.1, 0.5]$ works well across diverse tasks, providing sufficient differentiation without overwhelming the correctness signal.
    
    \item \textbf{Zone design}: The discretization into sweet-spot zones $\{\mathcal{Z}_k, s_k\}_{k=1}^K$ suppresses noise in continuous proximity scores while preserving meaningful quality differences. Empirically, $K=3$ to $K=5$ zones provide an effective trade-off.
\end{itemize}

\section{Additional Implemenation Details}
\label{app:method_details}

\subsection{Gaussian Field Construction for GUI}
\label{app:gaussian_field}

For GUI grounding tasks, we compute step proximity $h(s_t, a_t)$ using a Gaussian field centered at the target bounding box. Given a predicted point $\mathbf{p} = (x, y)$ and ground-truth bounding box $B = (x_1, y_1, x_2, y_2)$, we first compute the box center and dimensions:
\paragraph{Step Proximity (Gaussian Field).}
Given $\mathbf{p}=(x,y)$ and $B=(x_1,y_1,x_2,y_2)$, let
\[
\mathbf{c}=\Bigl(\tfrac{x_1+x_2}{2},\,\tfrac{y_1+y_2}{2}\Bigr),\quad
a=\tfrac{x_2-x_1}{2},\quad b=\tfrac{y_2-y_1}{2}.
\]
Define the normalized distance and Gaussian field (choose $\sigma=\tfrac{1}{3}$ so the $3\sigma$ contour matches the inscribed ellipse):
\[
d^2(\mathbf{p},\mathbf{c})=\frac{(x-c_x)^2}{a^2}+\frac{(y-c_y)^2}{b^2},
\]
\[
\phi(\mathbf{p};B)=
\begin{cases}
\exp\!\bigl(-\tfrac{d^2(\mathbf{p},\mathbf{c})}{2\sigma^2}\bigr), & \mathbf{p}\in B,\\
0, & \text{otherwise}.
\end{cases}
\]

\paragraph{Zone Boundary Determination.}
Let the $k\sigma$ level be
\[
\tau_k=\exp\!\left(-\frac{k^2}{2}\right),\qquad k\in\{0,1,2,3\}.
\]
Define four zones inside $B$:
\[
\mathcal{Z}_1=\{\mathbf{p}\in B \mid \phi(\mathbf{p};B)\ge \tau_1\}.
\]
\[
\mathcal{Z}_2=\{\mathbf{p}\in B \mid \tau_2\le \phi(\mathbf{p};B)<\tau_1\}.
\]
\[
\mathcal{Z}_3=\{\mathbf{p}\in B \mid \tau_3\le \phi(\mathbf{p};B)<\tau_2\}.
\]
\[
\mathcal{Z}_4=\{\mathbf{p}\in B \mid 0<\phi(\mathbf{p};B)<\tau_3\}.
\]
Assign discrete scores
\[
s(\mathbf{p})=
\begin{cases}
1.00,& \mathbf{p}\in\mathcal{Z}_1,\\
0.75,& \mathbf{p}\in\mathcal{Z}_2,\\
0.50,& \mathbf{p}\in\mathcal{Z}_3,\\
0.25,& \mathbf{p}\in\mathcal{Z}_4,\\
0,& \mathbf{p}\notin B.
\end{cases}
\]

\subsection{Blockwise Sweet-Spot Construction Details}
\label{app:blockwise_details}

\paragraph{Grid Partitioning.}
For a maze with dimensions $H \times W$, we partition it into a $3 \times 3$ grid of blocks. Each block $(i,j)$ with $i,j \in \{1,2,3\}$ covers cells:
\begin{align*}
\mathcal{B}_{i,j} = \Big\{(r,c) \mid &\left\lceil\frac{(i-1)H}{3}\right\rceil \le r < \left\lceil\frac{iH}{3}\right\rceil,\left\lceil\frac{(j-1)W}{3}\right\rceil \le c < \left\lceil\frac{jW}{3}\right\rceil\Big\}.
\end{align*}

\paragraph{Block-Level Matching.}
For each block $(i,j)$, we compute the number of matched cells between predicted and reference grids:
\[
n_{i,j} = \sum_{(r,c) \in \mathcal{B}_{i,j}} \mathbf{1}[\mathbf{M}^{\text{pred}}_{r,c} = \mathbf{M}^{\text{ref}}_{r,c}],
\]
where $\mathbf{M}^{\text{pred}}$ and $\mathbf{M}^{\text{ref}}$ are binary occupancy grids indicating path presence.

\paragraph{Block Sweet-Spot Assignment.}
Each block is assigned a sweet-spot value based on its match count $n_{i,j}$ relative to block size $|\mathcal{B}_{i,j}|$:
\[
s_{i,j} = \begin{cases}
1.0 & \text{if } n_{i,j} \ge 0.75 |\mathcal{B}_{i,j}| \quad \text{(high match)}, \\
0.67 & \text{if } 0.5 |\mathcal{B}_{i,j}| \le n_{i,j} < 0.75 |\mathcal{B}_{i,j}| \quad \text{(medium match)}, \\
0.33 & \text{if } 0.25 |\mathcal{B}_{i,j}| \le n_{i,j} < 0.5 |\mathcal{B}_{i,j}| \quad \text{(low match)}, \\
0.0 & \text{otherwise} \quad \text{(very low match)}.
\end{cases}
\]

The trajectory-level sweet-spot score is then:
\[
S(\tau) = \frac{1}{9} \sum_{i,j \in \{1,2,3\}} s_{i,j}.
\]

\paragraph{Extension to Sudoku.}
For Sudoku, we use the natural 3×3 subgrid structure. Each block $(i,j)$ corresponds to one of the nine Sudoku subgrids, and $n_{i,j}$ counts digit matches within that subgrid. The sweet-spot assignment follows the same thresholds as maze navigation, adapted to subgrid size (9 cells per block).

\paragraph{Extension to ARC-AGI.}
For ARC-AGI tasks, predicted and reference outputs may have varying dimensions. We first normalize both grids to the same size using nearest-neighbor interpolation, then apply the 3×3 partitioning scheme. For pattern matching, we consider both pixel value equality and spatial alignment within each block.

\subsection{Step Proximity Function Implementation}
\label{app:step_proximity}

Table~\ref{tab:app_step_proximity} summarizes the concrete implementation of step proximity function $h(s_t, a_t)$ for different task types.

\begin{table}[h]
\centering
\caption{\textbf{Step proximity function implementation for different tasks.}}
\begin{tabular}{lp{10.5cm}}
\toprule
Task Type & $h(s_t, a_t)$ Implementation \\
\midrule
GUI Grounding & Gaussian field value $\phi(\mathbf{p}; B)$ where $\mathbf{p}$ is predicted point \\
\midrule
GUI Planning & $h(s_t, a_t) = \phi(\mathbf{p}; B)$ if action involves spatial grounding (click, drag); otherwise $h(s_t, a_t) = C(a_t)$ (binary correctness) \\
\midrule
Maze Navigation & Computed via blockwise aggregation (Section~\ref{app:blockwise_details}) \\
\midrule
Sudoku & Computed via blockwise aggregation over 3×3 subgrids \\
\midrule
ARC-AGI & Computed via blockwise aggregation with normalized grids \\
\bottomrule
\end{tabular}
\label{tab:app_step_proximity}
\end{table}

\subsection{Verifier Implementation}
\label{app:verifier}

The binary verifier $C(\tau) \in \{0,1\}$ is implemented differently for each task category:

\paragraph{GUI Grounding.}
A prediction is correct if the predicted point $\mathbf{p}$ falls within the ground-truth bounding box $B$:
\[
C(\tau) = \mathbf{1}[x_1 \le x \le x_2 \text{ and } y_1 \le y \le y_2].
\]

\paragraph{GUI Planning.}
A trajectory is successful if all of the following conditions are met:
\begin{enumerate}[leftmargin=*, itemsep=1pt]
    \item Action type matches ground truth: $a_{\text{type}}^{\text{pred}} = a_{\text{type}}^{\text{ref}}$
    \item For grounding actions (click, drag): predicted point falls in target bounding box
    \item For text input actions: predicted text exactly matches reference text
    \item Terminal state satisfies task goal (e.g., correct page reached, form submitted)
\end{enumerate}

\paragraph{Maze Navigation.}
A path is valid if:
\begin{enumerate}[leftmargin=*, itemsep=1pt]
    \item Start position matches the designated start
    \item End position matches the designated goal
    \item All intermediate cells are walkable (not walls)
    \item Path is continuous (adjacent cells differ by one step in Manhattan distance)
\end{enumerate}

\paragraph{Sudoku.}
A solution is correct if:
\begin{enumerate}[leftmargin=*, itemsep=1pt]
    \item All cells are filled with digits 1-9
    \item Each row contains digits 1-9 exactly once
    \item Each column contains digits 1-9 exactly once
    \item Each 3×3 subgrid contains digits 1-9 exactly once
\end{enumerate}

\paragraph{ARC-AGI.}
A prediction is correct if the predicted output grid exactly matches the reference output grid in both dimensions and all cell values.

\subsection{Integration with RLVR Algorithms}
\label{app:rlvr_integration}

We integrate SSL with Group Relative Policy Optimization (GRPO). The standard GRPO objective is:
\begin{align*}
&\mathcal{L}_{\text{GRPO}}(\theta) = -\mathbb{E}_{\tau \sim \pi_\theta}\Bigg[
\min\Bigg(\frac{\pi_\theta(\tau)}{\pi_{\text{old}}(\tau)} A(\tau),\text{clip}\left(\frac{\pi_\theta(\tau)}{\pi_{\text{old}}(\tau)}, 1-\epsilon, 1+\epsilon\right) A(\tau)\Bigg)\Bigg] 
+ \beta \mathcal{D}_{\text{KL}}(\pi_\theta \| \pi_{\text{ref}}).
\end{align*}
where $A(\tau) = R(\tau) - \frac{1}{N}\sum_{i=1}^N R(\tau_i)$ is the advantage computed using group-relative baseline, $\epsilon$ is the clipping parameter, and $\beta$ controls KL divergence regularization.

\paragraph{SSL Integration.}
We simply replace the binary reward $R_{\text{bin}}(\tau) = C(\tau)$ with the SSL reward:
\[
R_{\text{SSL}}(\tau) = C(\tau) + \alpha \, \widehat{S}(\tau),
\]
while keeping all other GRPO components unchanged. The advantage becomes:
\[
A_{\text{SSL}}(\tau) = R_{\text{SSL}}(\tau) - \frac{1}{N}\sum_{i=1}^N R_{\text{SSL}}(\tau_i).
\]
This plug-and-play integration show SSL's compatibility with existing RLVR frameworks. The same integration strategy applies to other policy gradient algorithms (PPO, REINFORCE, etc.) by replacing their reward signals.

\subsection{Training Configuration.}
Table~\ref{tab:app_training_config} summarizes training hyperparameters. All experiments are based on 8 NVIDIA A100-80G GPUs.

\begin{table}[h]
\centering
\caption{\textbf{Training hyperparameters in SFT and RL training.}}
\begin{tabular}{lcc}
\toprule
Parameter & SFT Stage & RL Stage \\
\midrule
Learning rate & 1e-5 & 1e-6 \\
Batch size & 32 & 128 \\
Max gradient norm & 1.0 & 1.0 \\
Warmup ratio & 0.1 & 0.05 \\
Weight decay & 0.01 & 0.01 \\
Optimizer & AdamW & AdamW \\
$\beta_1, \beta_2$ & (0.9, 0.999) & (0.9, 0.999) \\
Training epochs & 1 & 10 \\
Max sequence length & 2048 & 2048 \\
KL penalty coefficient & N/A & 0.1 \\
Number of rollouts & N/A & 8 \\
Attn implementation & flash attention 2 & flash attention 2 \\ 
\bottomrule
\end{tabular}
\label{tab:app_training_config}
\end{table}

\section{Training Data and Evaluation Details}
\label{app:method_details}

\begin{table*}[h]
\centering
\caption{\textbf{Dataset statistics.}}
\begin{tabular}{lcccc}
\toprule
Dataset & Split & \#Samples & Domain & Task \\
\midrule
\multicolumn{5}{c}{\textit{Training Data}} \\
\midrule
Perception-2K & Train & 2,098 & Web + Mobile & Perception \\
Mix-3K & Train & 3,570 & Web + Mobile & Perception + Planning \\
Maze-Train & Train & 1,000 & Spatial Navigation & Complex Reasoning + Planning \\
Sudoku-Train & Train & 1,000 & Logic Puzzle & Complex Reasoning \\
ARC-AGI-1-Train & Train & 400 & Abstract Reasoning & Complex Reasoning \\
ARC-AGI-2-Train & Train & 1,000 & Abstract Reasoning & Complex Reasoning \\
\midrule
\multicolumn{5}{c}{\textit{Evaluation Benchmarks}} \\
\midrule
GUI-Act-Web & Test & 1,410 & Web & Short-term Planning \\
OmniAct-Web & Test & 1,427 & Web & Short-term Planning \\
OmniAct-Desktop & Test & 594 & Desktop & Short-term Planning \\
AndroidControl-Low & Test & 7,708 & Mobile & Short-term Planning \\
AndroidControl-High & Test & 7,708 & Mobile & Long-term Planning \\
GUI-Odyssey & Test & 17,920 & Web + Mobile & Long-term Planning \\
ScreenSpot & Test & 1,272 & Multi-platform & Perception \\
ScreenSpot-Pro & Test & 1,581 & Multi-platform & Perception \\
Maze-Test & Test & 1,000 & Spatial Navigation & Complex Reasoning + Planning \\
Sudoku-Test & Test & 100 & Logic Puzzle & Complex Reasoning \\
ARC-AGI-1-Test & Test & 400 & Abstract Reasoning & Complex Reasoning \\
ARC-AGI-2-Test & Test & 120 & Abstract Reasoning & Complex Reasoning \\
\bottomrule
\end{tabular}
\label{tab:app_dataset_stats}
\end{table*}

The statistics of the datasets are recorded in Table~\ref{tab:app_dataset_stats}. Note that the evaluation metrics for most datasets are consistent with their official standards, such as accuracy, success rate, which are introduced in Section 3.1 in the main text.

\paragraph{GUI Planning Benchmarks.}
\begin{itemize}[leftmargin=1.5em]
\item \textbf{GUI-Act}~\cite{guiact} is a web-based GUI action prediction benchmark containing 1,410 test examples. Each sample provides a screenshot of a web interface along with a natural language task description, requiring models to predict the appropriate action type (e.g., click, type, scroll) and corresponding parameters (e.g., coordinates, input text). We use this dataset to evaluate short-term planning capabilities in web environments.

\item \textbf{OmniAct}~\cite{kapoor2024omniact} provides comprehensive GUI automation benchmarks across both web (1,427 samples) and desktop (594 samples) platforms. The dataset emphasizes diverse interaction types and complex multi-step scenarios, requiring models to ground actions in realistic application contexts. We leverage both splits to assess cross-platform generalization in short-term planning tasks.

\item \textbf{AndroidControl}~\cite{androidcontrol} offers mobile GUI automation benchmarks at two complexity levels. AndroidControl-Low (7,708 samples) focuses on single-step or short-sequence tasks requiring 1-3 actions, while AndroidControl-High (7,708 samples) presents long-horizon scenarios with 5-15 sequential actions. Both benchmarks evaluate end-to-end mobile app interaction capabilities, and we use them to assess short-term and long-term planning performance respectively.

\item \textbf{GUI-Odyssey}~\cite{lu2025guiodyssey} is a challenging long-term planning benchmark comprising 17,920 test cases spanning both web and mobile platforms. Tasks require multi-step sequential reasoning, testing models' ability to maintain task context and execute coherent action sequences. We employ this benchmark to evaluate sustained planning capabilities across extended interaction horizons.
\end{itemize}

\paragraph{Fine-Grained Perception Benchmarks.}
\begin{itemize}[leftmargin=1.5em]
\item \textbf{ScreenSpot}~\cite{cheng-etal-2024-seeclick} is a comprehensive vision-language benchmark for GUI grounding that evaluates models' ability to locate UI elements based on natural language instructions. The dataset contains 1,272 test samples spanning mobile, desktop, and web interfaces across diverse application domains including development tools, creative software, office applications, and operating systems. Each sample consists of a screenshot paired with a natural language description of a target UI element, requiring models to predict precise 2D coordinates. We utilize this benchmark to assess fine-grained spatial grounding capabilities.

\item \textbf{ScreenSpot-Pro}~\cite{li2025screenspot} extends the original ScreenSpot benchmark with 1,581 more challenging test cases that emphasize complex interface layouts, ambiguous element descriptions, and cross-platform diversity. The enhanced dataset includes additional categories and more nuanced evaluation metrics to better differentiate model performance on subtle grounding tasks. We employ this benchmark to validate our method's robustness on difficult perception scenarios.
\end{itemize}

\paragraph{Complex Reasoning Benchmarks.}
\begin{itemize}[leftmargin=1.5em]
\item \textbf{Maze Navigation}~\cite{ivanitskiy2023configurable} is a configurable pathfinding benchmark that evaluates spatial reasoning and sequential planning capabilities. The dataset consists of 1,000 training mazes and 1,000 test mazes with varying complexity levels, including different maze sizes (9×9 to 25×25), obstacle densities, and path tortuosity. Each instance requires finding a valid path from a designated start position to a goal location while avoiding walls. The benchmark tests models' ability to perform long-horizon spatial reasoning. We utilize this dataset to assess our method's performance on structured spatial planning and reasoning tasks.

\item \textbf{Sudoku Solving}~\cite{sudoku-extreme} provides a comprehensive constraint satisfaction benchmark based on the classic Sudoku puzzle. The dataset includes 1,000 training puzzles and 100 test puzzles spanning multiple difficulty levels (easy, medium, hard, extreme). Each puzzle presents a partially filled 9×9 grid that must be completed such that every row, column, and 3×3 subgrid contains digits 1-9 without repetition. The benchmark emphasizes logical deduction and constraint propagation capabilities. We employ this dataset to evaluate our method's ability to handle complex combinatorial reasoning with strict constraints.

\item \textbf{ARC-AGI}~\cite{chollet2019measure,chollet2025arc} (Abstraction and Reasoning Corpus) is a challenging benchmark designed to measure abstract reasoning and few-shot learning capabilities. We evaluate on two test sets: ARC-AGI-1 (400 samples) from the original 2019 release and ARC-AGI-2 (120 samples) from the 2025 updated version. Each task presents 2-4 input-output demonstration pairs followed by a test input, requiring models to induce the underlying transformation rule and apply it to generate the correct output grid. Tasks involve diverse reasoning patterns including object manipulation, symmetry detection, pattern completion, and compositional transformations. The dataset is particularly challenging as it requires discovering novel abstractions rather than applying memorized patterns. We use both versions to comprehensively assess our method's abstract reasoning and generalization capabilities.
\end{itemize}

\section{Statistical Significance Analysis}\label{app:statistical_analysis}
Table~\ref{tab:statistical_test} reports statistical significance tests for SSL improvements over RL-Continuous baseline. We conduct paired t-tests across 3 independent runs for each benchmark. Results demonstrate that SSL achieves statistically significant improvements (p$<$0.05) across all task categories, with particularly strong gains in complex reasoning tasks where differentiated rewards provide clearer learning signals. The consistent improvements across diverse benchmarks validate SSL's effectiveness as a general principle for agent optimization.

\begin{table}[h]
\centering
\caption{Statistical significance of SSL vs RL-Continuous (3B models). We report average success rate (\%) across benchmarks within each category.}\label{tab:statistical_test}
\begin{tabular}{lccc}
\toprule
Task Category & SSL-3B & RL-Cont-3B & p-value \\
\midrule
Short-term Planning & 80.07±1.2 & 72.87±1.5 & $<$0.05 \\
Long-term Planning & 52.05±1.8 & 43.33±2.1 & $<$0.05 \\
Complex Reasoning & 39.28±1.5 & 32.10±1.8 & $<$0.05 \\
\midrule
\textbf{Overall Average} & \textbf{57.13±1.5} & \textbf{49.43±1.8} & \textbf{$<$0.05} \\
\bottomrule
\end{tabular}
\end{table}

\begin{table*}[t]
\centering
\caption{\textbf{Performance on more challenging long-term planning tasks.} * indicates SFT on Mix-3K. All RL experiments are based on standard RLVR paradigm and trained on Mix-3K by default. $\bigtriangleup$ denotes relative improvements over RL-Binary baseline ($\frac{\text{SSL}}{\text{RL-Binary}}-1$).}
\begin{tabular}{lccc cccc}
\toprule
\multirow{2}{*}{Models} &  
\multicolumn{3}{c}{AndroidControl-High} & 
\multicolumn{3}{c}{GUI-Odyssey} & 
\multirow{2}{*}{Avg.}  \\
& Type & GR & SR & Type & GR & SR &  \\
\midrule
\multicolumn{8}{c}{\textit{Zero Shot}}\\
\midrule
GPT-4o & 63.06 & 30.90 & 21.17 & 37.50 & 14.17 & 5.36 & 28.69\\
QwenVL2.5-3B & 47.81 & 46.51 & 38.90 & 37.40 & 26.49 & 26.69 & 37.30 \\
QwenVL2.5-7B & 68.67 & 59.71 & 47.06 & 55.60 & 37.78 & 34.37 & 50.53 \\
\midrule
\multicolumn{8}{c}{\textit{Post-Training (Supervised Fine-Tuning or Reinforcement Learning)}}\\
\midrule
OS-Atlas-4B & 49.01 & 49.51 & 22.77 & 49.63 & 34.63 & 20.25 & 37.63 \\
OS-Atlas-7B & 57.44 & 54.90 & 29.83 & 60.42 & 39.74 & 26.96 & 44.88 \\
QwenVL2.5-3B* & 52.05 & 49.53 & 41.22 & 43.69 & 32.21 & 27.31 & 41.00 \\
QwenVL2.5-7B* & 69.15 & 58.69 & 48.11 & 56.78 & 38.65 & 34.44 & 50.97\\
UI-R1-3B & 57.85 & 55.70 & 45.44 & 52.16 & 34.46 & 32.49 & 46.35 \\
\midrule
RL-Continuous-3B & 64.72 & 60.91 & 49.96 & 58.57 & 43.97 & 36.69 & 52.47 \\
RL-Binary-3B & 59.56 & 56.24 & 44.85 & 56.78 & 42.67 & 38.78 & 49.81 \\
\rowcolor{mygray} 
SSL-3B (Ours) & \textbf{65.96} & \textbf{62.41} & \textbf{61.62} & \textbf{63.75} & \textbf{46.44} & \textbf{42.47} & \textbf{57.11} \\
\rowcolor{pink}
\textbf{$\bigtriangleup$ $(\uparrow,\%)$} & 
+10.7 & 
+11.0 & 
+37.4 & 
+12.3 & 
+8.8 & 
+9.5 & 
+14.6 \\
\midrule
RL-Continuous-7B & 70.45 & 62.77 & 49.09 & \textbf{66.26} & 43.33 & 38.61 & 55.09 \\ 
RL-Binary-7B & 70.82 & 63.43 & 49.50 & 63.47 & 44.79 & 37.67 & 54.95 \\
\rowcolor{mygray} 
SSL-7B (Ours) & \textbf{71.79} & \textbf{65.56} & \textbf{52.31} & 65.90 & \textbf{46.02} & \textbf{39.93} & \textbf{56.92} \\
\rowcolor{pink}
\textbf{$\bigtriangleup$ $(\uparrow,\%)$} & 
+1.4 & 
+3.4 & 
+5.7 & 
+3.8 & 
+2.7 & 
+6.0 & 
+3.6 \\
\bottomrule
\end{tabular}
\label{tab:high_level}
\end{table*}

\begin{table*}[ht!] 
\centering
\caption{\textbf{Performance on GUI grounding tasks.} * indicates SFT on Mix-3K. All RL experiments are based on standard RLVR paradigm and trained on Mix-3K by default. $\bigtriangleup$ denotes relative improvements over RL-Binary baseline ($\frac{\text{SSL}}{\text{RL-Binary}}-1$).}
\resizebox{\textwidth}{!}{
\begin{tabular}{l|cc cc cc cc cc cc c|cc cc c}
\toprule
\multirow{3}{*}{Models} & \multicolumn{13}{c}{ScreenSpot-Pro} & \multicolumn{5}{c}{ScreenSpot} \\
& \multicolumn{2}{c}{Dev} & \multicolumn{2}{c}{Creative} & \multicolumn{2}{c}{CAD} & \multicolumn{2}{c}{Scientific} & \multicolumn{2}{c}{Office} & \multicolumn{2}{c}{OS} & \multicolumn{1}{c}{Avg.} & \multicolumn{2}{c}{Web} & \multicolumn{2}{c}{Desktop} & \multicolumn{1}{c}{Avg.} \\
& Text & Icon & Text & Icon & Text & Icon & Text & Icon & Text & Icon & Text & Icon & & Text & Icon & Text & Icon & \\
\midrule
\multicolumn{19}{c}{\textit{Zero Shot}}\\
\midrule
GPT-4o & 1.3 & 0.0 & 1.0 & 0.0 & 2.0 & 0.0 & 2.1 & 0.0 & 1.1 & 0.0 & 0.0 & 0.0 & 0.6 & 11.1 & 7.8 & 20.6 & 19.4 & 14.7 \\
QwenVL2.5-3B & 16.2 & 1.4 & 23.3 & 1.4 & 10.2 & 4.7 & 38.2 & 6.4 & 24.3 & 3.8 & 15.0 & 1.1 & 12.2 & 60.8 & 43.5 & 70.1 & 35.0 & 52.4 \\
QwenVL2.5-7B & 33.1 & 2.1 & 23.7 & 3.5 & 12.2 & 6.3 & 36.8 & 7.3 & 37.8 & 7.5 & 30.8 & 6.9 & 17.3 & 86.9 & 65.1 & 89.7 & 60.0 & 75.4 \\
\midrule
\multicolumn{19}{c}{\textit{Post-Training (Supervised Fine-Tuning or Reinforcement Learning)}}\\
\midrule
SeeClick-9.6B & 0.6 & 0.0 & 1.0 & 0.0 & 2.5 & 0.0 & 3.5 & 0.0 & 1.1 & 0.0 & 2.8 & 0.0 & 1.0 & 55.7 & 32.5 & 72.2 & 30.0 & 47.6 \\
FOCUS-2B & 22.8 & 1.7 & 23.7 & 1.7 & 7.6 & 3.1 & 25.0 & 7.1 & 23.2 & 7.7 & 17.8 & 2.5 & 12.0 & 81.7 & 68.5 & 80.9 & 65.0 & 74.0 \\
ShowUI-2B & 16.9 & 1.4 & 9.1 & 0.0 & 2.5 & 0.0 & 13.2 & 7.3 & 15.3 & 7.5 & 10.3 & 2.2 & 7.1 & 81.7 & 63.6 & 76.3 & 61.1 & 70.7 \\
Os-Atlas-4B & 7.1 & 0.0 & 3.0 & 1.4 & 2.0 & 0.0 & 9.0 & 5.5 & 5.1 & 3.8 & 5.6 & 0.0 & 3.5 & 82.6 & 63.1 & 72.1 & 45.7 & 65.9 \\
Os-Atlas-7B & 33.1 & 1.4 & 28.8 & 2.8 & 12.2 & 4.7 & 37.5 & 7.3 & 33.9 & 5.7 & 27.1 & 4.5 & 16.6 & 90.8 & 74.2 & 91.7 & 62.8 & 79.9 \\
UGround-7B & 26.6 & 2.1 & 27.3 & 2.8 & 14.2 & 1.6 & 31.9 & 2.7 & 31.6 & 11.3 & 17.8 & 0.0 & 14.2 & 80.4 & 70.4 & 82.5 & 63.6 & 74.2 \\
CogAgent-18B & 14.9 & 0.7 & 9.6 & 0.0 & 7.1 & 3.1 & 22.2 & 1.8 & 13.0 & 0.0 & 5.6 & 0.0 & 6.5 & 70.4 & 28.6 & 74.2 & 20.0 & 48.3 \\
Aria-GUI & 16.2 & 0.0 & 23.7 & 2.1 & 7.6 & 1.6 & 27.1 & 6.4 & 20.3 & 1.9 & 4.7 & 0.0 & 9.3 & - & - & - & - & - \\
Claude** & 22.0 & 3.9 & 25.9 & 3.4 & 14.5 & 3.7 & 33.9 & 15.8 & 30.1 & 16.3 & 11.0 & 4.5 & 15.4 & - & - & - & - & - \\
QwenVL2.5-3B* & 20.3 & 1.8 & 24.6 & 2.8 & 11.2 & 4.7 & 39.5 & 6.4 & 28.6 & 5.7 & 17.8 & 2.2 & 13.8 & 73.0 & 48.5 & 85.7 & 46.2 & 63.4 \\
QwenVL2.5-7B* & 31.4 & 1.8 & 27.3 & 3.5 & 15.7 & 5.1 & 40.7 & 7.9 & 39.7 & 8.9 & 32.4 & 6.9 & 18.4 & 87.8 & 68.2 & 90.3 & 62.8 & 77.3 \\
UI-R1-3B & 22.7 & 4.1 & 27.3 & 3.5 & 11.2 & 6.3 & 43.4 & 11.8 & 32.2 & 11.3 & 13.1 & 4.5 & 16.0 & 85.2 & 73.3 & 90.2 & 59.3 & 77.0 \\
UGround-7B & 26.6 & 2.1 & 27.3 & 2.8 & 14.2 & 1.6 & 31.9 & 2.7 & 31.6 & 11.3 & 17.8 & 0.0 & 14.2 & 80.4 & 70.4 & 82.5 & 63.6 & 74.2 \\
\midrule
RL-Continuous-3B & 27.3 & 4.2 & \textbf{40.4} & 5.6 & \textbf{27.4} & 9.4 & 54.9 & 19.1 & 49.2 & 11.4 & 26.2 & 3.4 & 23.2 & 84.3 & 66.0 & \textbf{93.3} & 62.8 & 76.6 \\
RL-Binary-3B & 28.0 & 4.2 & 35.4 & 4.2 & 25.9 & 7.8 & 61.1 & 20.0 & 44.7 & 7.5 & 25.3 & 4.5 & 22.4 & 87.4 & 67.9 & 90.3 & 61.4 & 76.8 \\
\rowcolor{mygray}
SSL-3B (Ours) & \textbf{31.4} & \textbf{4.9} & 37.9 & \textbf{7.0} & 26.4 & \textbf{10.9} & \textbf{61.8} & \textbf{21.9} & 45.8 & \textbf{13.2} & \textbf{27.1} & \textbf{7.9} & \textbf{24.7} & \textbf{88.3} & \textbf{68.4} & 91.3 & \textbf{65.0} & \textbf{78.3} \\
\rowcolor{pink}
\textbf{$\bigtriangleup$ $(\uparrow, \%)$} & +12.1 & +16.7 & +7.1 & +66.7 & +1.9 & +39.7 & +1.1 & +9.5 & +2.5 & +76.0 & +7.1 & +75.6 & +10.3 & +1.0 & +0.7 & +1.1 & +5.9 & +2.0 \\
\midrule
RL-Continuous-7B & 48.0 & 4.1 & 40.9 & 9.0 & \textbf{27.4} & 4.7 & \textbf{61.1} & 12.7 & 56.4 & 24.5 & 41.1 & 12.3 & 28.5 & 91.3 & 76.6 & 92.2 & \textbf{70.0} & 82.5 \\
RL-Binary-7B & 40.0 & 3.4 & 40.4 & 6.9 & 22.8 & 4.7 & 54.8 & 11.8 & 55.9 & 20.7 & 41.1 & 12.3 & 26.2 & 91.3 & 75.7 & 92.7 & 68.6 & 82.1 \\
\rowcolor{mygray}
SSL-7B (Ours) & \textbf{50.7} & \textbf{5.5} & \textbf{42.0} & \textbf{9.1} & 24.4 & \textbf{5.1} & 59.0 & \textbf{13.7} & \textbf{57.1} & \textbf{26.5} & \textbf{42.1} & \textbf{13.5} & \textbf{29.1} & \textbf{93.0} & 75.7 & \textbf{93.3} & \textbf{70.0} & \textbf{83.0} \\
\rowcolor{pink}
\textbf{$\bigtriangleup$ $(\uparrow, \%)$} & +26.8 & +61.8 & +4.0 & +31.9 & +7.0 & +8.5 & +7.7 & +16.1 & +2.1 & +28.0 & +2.4 & +9.8 & +11.1 & +1.9 & +1.3 & +0.6 & +2.0 & +1.1 \\
\bottomrule
\end{tabular}}
\label{tab:grounding}
\end{table*}

\section{Additional Experimental Results}
\label{app:additional_results}

\subsection{Complete Long-Term Planning Results}\label{app:longterm_full}
Table~\ref{tab:high_level} presents the complete results for long-term planning tasks mentioned in Section 3.2.

\subsection{Complete GUI Perception Results}\label{app:longterm_full}
Table~\ref{tab:grounding} presents the complete results for GUI grouding tasks mentioned in Section 3.2.

\subsection{Why Tiered Rewards Over Continuous Rewards?}
\label{app:tiered_vs_continuous}
A natural question about SSL is: why should discretized, tiered rewards outperform continuous reward shaping when ground-truth solutions are available? We provide a detailed analysis answering this concern.

\paragraph{Sources of Gradient Noise in Continuous Rewards.}
Even with ground-truth solutions, continuous rewards introduce gradient noise through \textbf{sample-level variance} rather than reward measurement error. Consider the GRPO gradient estimator:
\[
\nabla_\theta J = \frac{1}{N} \sum_{i=1}^N (R(\tau_i) - \bar{R}) \nabla_\theta \log \pi_\theta(\tau_i)
\]
When rewards are continuous (e.g., $R(\tau) = \text{distance\_ratio} \in [0,1]$), three issues arise:

\begin{enumerate}
\item \textbf{Small reward differences dominate the advantage term}: Consider two GUI trajectories with click offsets of 45px and 50px from target—their distance ratios differ by only 0.02-0.05. In a batch of N=128 samples, such small differences get amplified through the baseline subtraction $(R(\tau_i) - \bar{R})$, creating noisy gradient signals where trajectories with negligible quality differences receive opposing update directions.

\item \textbf{High variance in policy gradient estimates}: Table~\ref{tab:gradient_variance} shows that continuous rewards produce 2.1×-3.5× higher gradient variance than tiered rewards across tasks. This is because continuous scores scatter samples uniformly across [0,1], whereas tiered scores cluster samples into K discrete groups, producing more stable advantage estimates within each group.

\item \textbf{Spurious optimization signals}: Continuous rewards can create misleading gradients when slight quality differences (e.g., 42px vs 47px offset) correlate randomly with unrelated trajectory features. SSL's discretization filters out these spurious signals by treating both as equivalent "mid-tier" solutions.
\end{enumerate}

\begin{table}[h]
\centering
\caption{Gradient variance comparison: continuous vs tiered rewards (3B model, measured by $\text{Var}[\nabla_\theta J^\top u]$ where $u$ is the dominant gradient direction).}
\label{tab:gradient_variance}
\begin{tabular}{lccc}
\toprule
Task Type & Continuous & Tiered (K=4) & Variance Ratio \\
\midrule
GUI Grounding & 0.047±0.008 & 0.022±0.004 & 2.14× \\
Short-term Planning & 0.053±0.009 & 0.025±0.005 & 2.12× \\
Complex Reasoning & 0.091±0.015 & 0.026±0.006 & 3.50× \\
\bottomrule
\end{tabular}
\end{table}

\paragraph{Why Finer Differentiation Is Not Always Better.}
A natural question is that theoretically, finer reward granularity should provide more information. However, this intuition breaks down in practice due to the \textbf{sample efficiency-discrimination tradeoff}:

\begin{itemize}
\item \textbf{Statistical power}: With finite samples per batch (N$\approx$128), having K=10+ zones means each zone contains only 10-15 samples on average. This sparse coverage leads to unreliable gradient estimates per zone.

\item \textbf{Over-fitting to noise}: Very fine zones (K$\ge$8) risk over-fitting to spurious correlations in the training data. For example, in GUI tasks, continuous distance might spuriously correlate with screen region (top-left clicks happening to be closer on average), leading the policy to overfit to screen positions rather than semantic targets.
\end{itemize}

\paragraph{SSL's Design Is More Principled Than Arbitrary Continuous Shaping.}
The reviewer raises concerns about task-specific zone design in SSL. We argue that SSL's zone selection is actually \textbf{more systematic and less arbitrary} than continuous reward shaping:

\begin{table}[h]
\centering
\caption{Comparison of design complexity: continuous shaping vs SSL.}
\label{tab:design_comparison}
\begin{tabular}{p{4cm}p{5.6cm}p{5.6cm}}
\toprule
Aspect & Continuous Reward Shaping & SSL (Tiered Rewards) \\
\midrule
\textbf{Design Choices} & 
- Distance function (L1, L2, normalized?)
- Scaling factor $\alpha$
- Potential function shape (linear, exponential, logarithmic?)
- Normalization strategy & 
- Zone count K (robust to K$\in$[3,5])
- Boundary placement (natural thresholds: $\sigma$-levels, quartiles)
- Score assignment (uniform $s_k$=$k/K$) \\
\midrule
\textbf{Tuning Effort} & 
Requires careful tuning of functional form and scaling to avoid reward hacking or vanishing gradients & 
Default configuration works across tasks; minimal tuning needed \\
\midrule
\textbf{Interpretability} & 
Continuous values lack clear semantic meaning (what does reward=0.637 mean?) & 
Discrete tiers have clear interpretation (e.g., "within 1$\sigma$ band" = high quality) \\
\bottomrule
\end{tabular}
\end{table}

\textbf{Evidence from our experiments:}
\begin{itemize}
\item RL-Continuous baseline uses carefully designed continuous rewards (Gaussian distance for GUI, cell-wise matching for reasoning), yet SSL consistently outperforms it (Table 1, 2).
\item SSL's zone design follows task-natural structure (Gaussian $\sigma$-levels for spatial tasks, quartiles for matching), requiring minimal manual tuning.
\item Cross-task transfer experiments (Table 3) show SSL trained on perception transfers to planning without zone redesign, demonstrating generalizability.
\end{itemize}

\paragraph{Theoretical Justification: Signal-to-Noise Ratio Enhancement.}
Our Proposition 3.2 formalizes why tiered rewards improve optimization. The key insight is that discretization acts as a \textbf{noise filter}:

\[
\text{SNR}_{\text{SSL}}(u) = \frac{|\mathbb{E}[\nabla J_{\text{SSL}}^\top u]|}{\sqrt{\text{Var}[\nabla J_{\text{SSL}}^\top u]}} \geq \text{SNR}_{\text{bin}}(u)
\]

This holds when sweet-spot scores $S(\tau)$ satisfy the alignment condition $Cov(S(\tau), \ell(\tau)^\top u$ $\geq$ 0. Importantly, this condition is \textbf{easier to satisfy with tiered rewards} than continuous ones because:
\begin{itemize}
\item Tiered scores $\hat{S}(\tau)$ aggregate similar-quality trajectories, reducing the impact of outlier samples that violate alignment
\item Continuous scores $S(\tau)$ are more susceptible to spurious anti-correlation with gradient directions for individual noisy samples
\end{itemize}

\paragraph{When Might Continuous Rewards Be Preferable?}
We acknowledge scenarios where continuous rewards could be beneficial:
\begin{itemize}
\item \textbf{Infinite sample regime}: With unlimited training data, continuous rewards can theoretically provide finer optimization signals
\item \textbf{Noise-free environments}: In deterministic simulators with no sampling variance, continuous rewards may converge faster
\item \textbf{Learned reward models}: When rewards come from a well-calibrated neural reward model (e.g., PRM), continuous confidence scores might be more informative than discretized tiers
\end{itemize}

However, none of these conditions hold in our setting: we have finite data (3K-10K samples), stochastic policy sampling, and verifier-based (not learned) rewards.

\paragraph{Conclusion.}
Tiered rewards outperform continuous rewards not because they provide ``more information'', but because they provide \textbf{more reliable information} under finite-sample, stochastic gradient estimation. The discretization-induced noise reduction outweighs the loss of fine-grained differentiation. SSL's zone design is task-aware but systematic, requiring less manual engineering than arbitrary continuous reward functions while providing better optimization properties.

\section{Case Studies}
\label{app:case_studies}
Across GUI planning, maze navigation, and ARC-style pattern induction, Sweet-Spot Learning (SSL) consistently improves the agent’s ability to make structured progress in reasoning tasks. In GUI planning, we partition the 2D click-offset distribution into Gaussian-based sweet-spot zones, providing fine-grained supervision on how well each click and stabilizing high-level action planning; Figure~\ref{fig:appendix_gui} shows the GUI agent successfully completing a complex multi-step task involving clicking, typing, and scrolling; consistent with this qualitative behavior, our method achieves substantial gains on complex multi-step GUI benchmarks. Figure~\ref{fig:appendix_maze} illustrates maze navigation under SSL: a graded reward reflects partial progress (moving closer to the goal), which in turn yields more reliable global path formation; in this example, the agent follows the optimal path from start to goal. Figure~\ref{fig:appendix_arc} illustrates ARC-style pattern induction: although this task is inherently difficult and the model achieves only partial correctness (e.g., correct colors and local alignments), the outcome is reasonable. SSL rewards intermediate structural consistency across input–output grids, helping the model validate and refine hypothesized transformation rules rather than relying solely on final binary correctness. Together, these case studies demonstrate that SSL offers a unified mechanism for guiding the model through ambiguous or sparsely-rewarded reasoning spaces, enabling more stable and interpretable improvements across diverse domains.

\section{Broader Impact}
\label{app:broader_impact}
SSL aims to improve the training efficiency and performance of intelligent agents across diverse applications. Potential positive impacts include:

\begin{itemize}
    \item \textbf{Accessibility}: By improving sample efficiency, SSL reduces computational costs, making advanced agent training more accessible to researchers with limited resources.
    \item \textbf{Robustness}: Tiered rewards encourage exploration of diverse high-quality solutions, potentially yielding more robust agents.
    \item \textbf{Generalization}: SSL's cross-task transferability may accelerate progress toward general-purpose agents.
\end{itemize}

\section{Additional Discussion and Analysis}
\label{sec:discussion}

\subsection{Discussion on Reward Hacking and Misalignment}\label{subsec:reward_hacking}

\paragraph{Potential Misalignment in Block-wise Rewards.}
For tasks like Sudoku where global constraint satisfaction is required, block-wise sweet-spot rewards may create misalignment between local and global objectives. For instance, an agent could maximize block-level matches while violating row/column constraints, leading to reward hacking.

\paragraph{Observed Failure Modes.}
We observe such failure cases in approximately 8\% of Sudoku trajectories during training, where models achieve high block scores ($S(\tau) > 0.7$) but fail global verification $C(\tau) = 0$. These cases typically involve:
\begin{itemize}
\item Repetition of digits within rows/columns across block boundaries
\item Correct block configurations that create contradictions globally
\end{itemize}

\paragraph{Mitigation Strategies.}
SSL mitigates reward hacking through two mechanisms:
\begin{enumerate}
\item \textbf{Binary correctness gating}: The reward $R_{SSL} = C(\tau) + \alpha \cdot \hat{S}(\tau)$ ensures sweet-spot rewards only amplify correct trajectories. Failed trajectories receive $R_{SSL} = \alpha \cdot \hat{S}(\tau) < 1$, providing weaker signal than successful ones.
\item \textbf{Zone discretization}: By grouping similar proximities into coarse zones, SSL reduces sensitivity to spurious local optima that continuous rewards might overfit to.
\end{enumerate}

\paragraph{Recommended Practices.}
For tasks with complex global constraints, we recommend:
(i) Increasing $\alpha$ moderately (0.1-0.3) to maintain correctness priority;
(ii) Incorporating cross-block consistency checks in zone design where feasible;
(iii) Monitoring the ratio of high-$S(\tau)$ but failed trajectories during training as an early indicator of misalignment.

Despite these limitations, SSL's improvements on Sudoku (+100\% for 3B) demonstrate that even imperfect sweet-spot signals substantially accelerate learning compared to sparse binary rewards.

\subsection{Comparison with Learned Reward Models}
\label{subsec:prm_orm}

\paragraph{Why Not Compare with PRM/ORM?}
Process Reward Models (PRMs) and Outcome Reward Models (ORMs)~\citep{lightman2023let,wang2023math,uesato2022solving,yu-etal-2024-teaching} have shown promise in mathematical reasoning domains. However, direct comparison is infeasible for our GUI and spatial reasoning tasks due to:

\begin{enumerate}
\item \textbf{Data scarcity}: Training effective PRMs requires large-scale step-level human annotations of solution quality. Current GUI datasets (e.g., Mix-3K) lack such annotations, and existing PRMs trained on math/code domains do not transfer well to visual grounding or spatial planning.

\item \textbf{Domain mismatch}: PRMs for math reasoning~\citep{lightman2023let,wang2023math} operate on text-based chain-of-thought, while GUI tasks require multimodal visual-spatial understanding. Adapting PRMs to score (screenshot, action) pairs would require substantial architectural changes and new annotation efforts.

\item \textbf{No open-source baselines}: To our knowledge, no publicly available PRM/ORM models exist for GUI agents or visual reasoning tasks, preventing fair comparison.
\end{enumerate}

\paragraph{Conceptual Relationship.}
SSL can be viewed as a lightweight alternative to PRMs that leverages task structure (e.g., spatial distance, block-wise matching) instead of learned models. While PRMs learn quality assessment from data, SSL derives it from geometric or structural proximity to ground truth. This makes SSL:
\begin{itemize}
\item More sample-efficient (no reward model training needed)
\item More interpretable (zone thresholds have clear semantic meaning)
\item But potentially less flexible for tasks where proximity is hard to define
\end{itemize}

\paragraph{Future Work.}
We plan to train task-specific PRMs/ORMs for GUI domains using our collected trajectories and sweet-spot annotations as weak supervision. Comparing learned reward models with SSL's structured rewards could reveal complementary strengths: PRMs may capture subtle quality nuances beyond spatial/structural proximity, while SSL provides stronger inductive biases for geometric tasks. Hybrid approaches that combine both paradigms represent a promising direction.

\begin{figure*}[htbp]
    \centering
    \includegraphics[width=\linewidth]{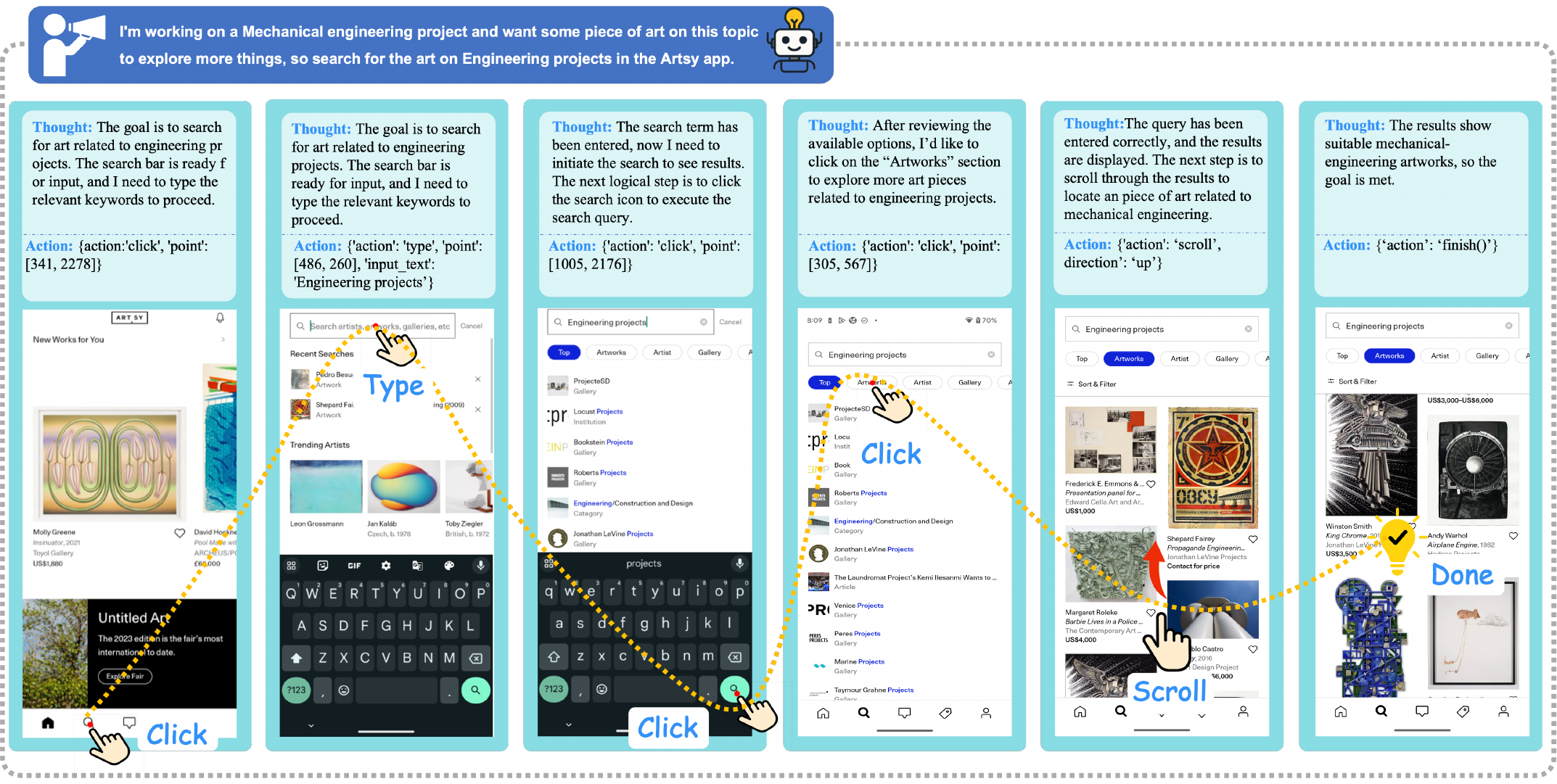}
    \caption{A step-wise illustration of our GUI agent executing a multi-turn instruction}
    \label{fig:appendix_gui}
\end{figure*}

\begin{figure*}[h!]
    \centering
    \includegraphics[scale=0.8]{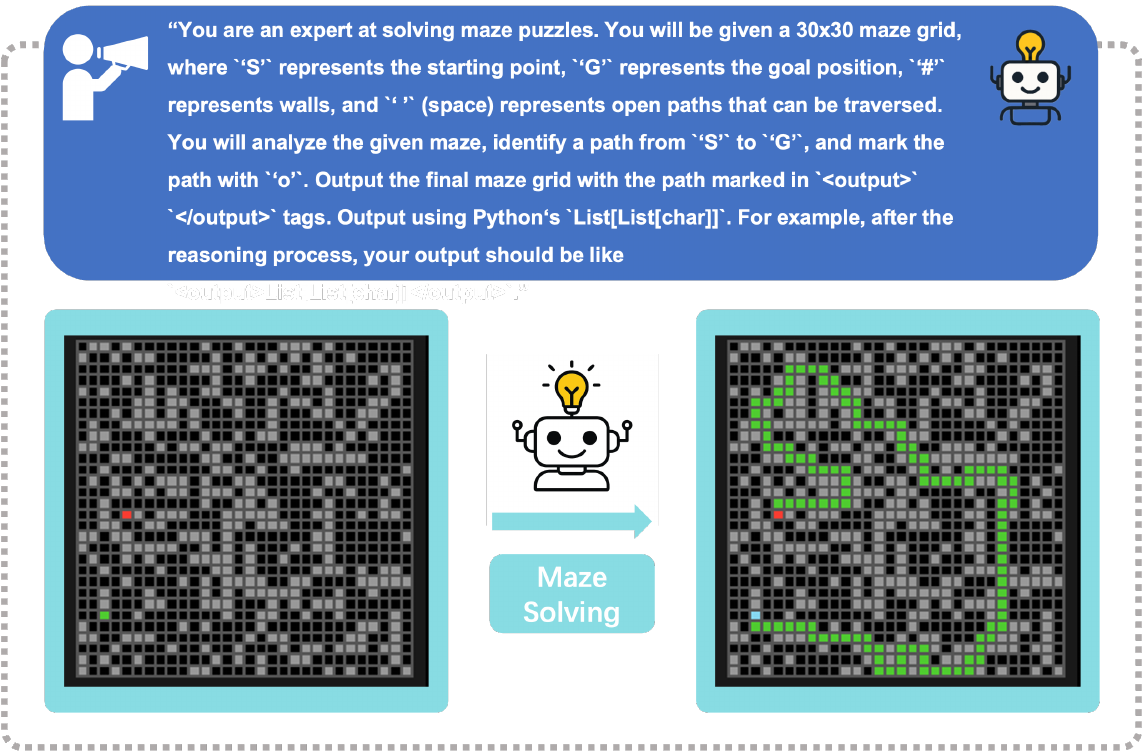}
    \caption{Maze-solving task that is constructed as a 30×30 symbolic grid. The left panel shows the raw input grid, and the right panel shows the model’s predicted solution path.}
    \label{fig:appendix_maze}
\end{figure*}

\begin{figure*}[hb!]
    \centering
    \includegraphics[width=\linewidth]{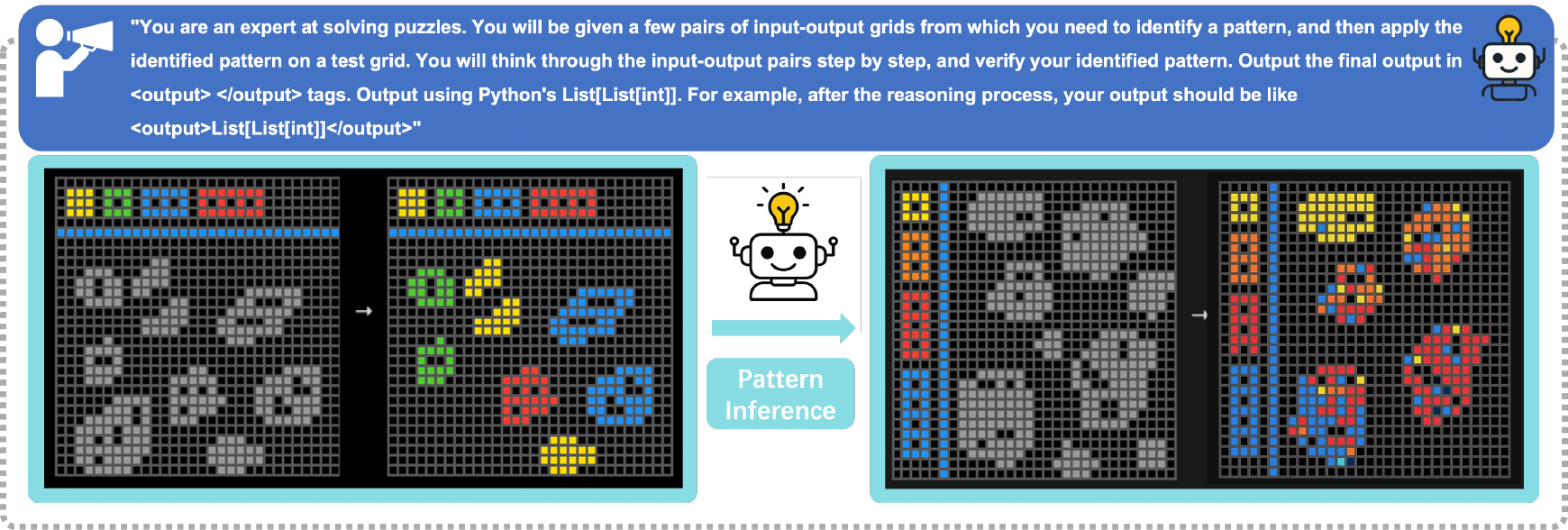}
    \caption{ARC-style pattern induction task. The model receives pairs of input–output grids, then must infer the underlying transformation and apply it to a new test input grid.}
    \label{fig:appendix_arc}
\end{figure*}

\end{document}